\documentclass{article}

\usepackage{arxiv}

\usepackage[utf8]{inputenc} 
\usepackage[T1]{fontenc}    
\usepackage{hyperref}       
\usepackage{url}            
\usepackage{booktabs}       
\usepackage{amsfonts}       
\usepackage{nicefrac}       
\usepackage{microtype}      
\usepackage{lipsum}
\usepackage{graphicx}
\graphicspath{ {./images/} }

\usepackage{xcolor}

\usepackage{enumitem}
\usepackage{mathtools}
\usepackage{amsthm}
\usepackage{bm}
\newtheorem{theorem}{Theorem}

\newtheorem{proposition}{Proposition}
\newtheorem{corollary}{Corollary}
\newtheorem{definition}{Definition}
\usepackage{comment}
\usepackage[normalem]{ulem}

\title{Distributional simplicity bias and effective convexity in energy-based models}

\author{
  Aurélien Decelle \\
  Escuela Técnica Superior de Ingenieros Industriales \\
  Universidad Politécnica de Madrid \\
  Calle de José Gutiérrez Abascal 2, Madrid 28006, Spain \\
  and \\
  Departamento de Física Teórica \\
  Universidad Complutense de Madrid \\
  Plaza de las Ciencias 1, Madrid 28040 \\
  \texttt{adecelle@ucm.es} \\
  \And
  Alfonso de Jesús~Navas Gómez \\
  Departamento de Física Teórica \\
  Universidad Complutense de Madrid \\
  Plaza de las Ciencias 1, Madrid 28040 \\
  \texttt{alfonn01@ucm.es} \\
  \And
  Beatriz Seoane \\
  Departamento de Física Teórica \& IPARCOS \\
  Universidad Complutense de Madrid \\
  Plaza de las Ciencias 1, Madrid 28040 \\
  \texttt{beseoane@ucm.es} \\
}

\begin{document}
\maketitle
\begin{abstract}
Energy-based learning is a powerful framework for generative modelling, but its training is inherently non-convex, leading potentially to sensitivity to initialisation, poor local optima, and unstable gradient dynamics. We present a dynamical analysis of energy-based learning through the lens of the effective model, which can be interpreted as either a generalised Ising model with higher-order interactions or the Fourier expansion of the energy. Under sufficient expressivity, we show that the gradient flow induced by learning strictly positive distributions over binary variables admits two types of fixed points: data-consistent points, which exactly reproduce the target distribution, and spurious points, which satisfy stationarity without matching the target distribution. Around data-consistent points, we show that perturbations are either stable or neutral, with neutral directions leaving the effective model invariant. 
Finally, we show that gradient dynamics induce a hierarchy in which lower-order interactions are learned before higher-order ones. This provides a mechanistic explanation for the distributional simplicity bias, and clarifies why fixed points that are not data-consistent at low orders are not observed in practice.
\end{abstract}

\section{Introduction}
The strong generalisation performance of over-parametrised neural networks is widely attributed to implicit inductive biases, among which \textit{simplicity bias} plays a central role. This principle states that, when trained with stochastic gradient-based methods, models tend to learn simple functions first and only gradually fit more complex nonlinear structures~\cite{valle2018deep,kalimeris2019sgd,refinetti2023neural}. 
Recent empirical evidence, particularly in modern architectures such as transformers trained on natural language~\cite{rende2024distributional,belrose2024neural,garnier2024transformers}, suggests that this phenomenon extends beyond function approximation to the level of data distributions. In this setting, learning dynamics exhibit a structured progression in the order of interactions: models first reproduce low-order statistics and subsequently incorporate higher-order dependencies, corresponding to increasingly complex many-body interactions among input variables. 
This behaviour appears to be robust across a wide range of architectures and learning paradigms, including supervised models as well as unsupervised and generative frameworks such as autoencoders~\cite{kogler2024compression}, energy-based models~\cite{decelle2024inferring,decelle2025inferring}, and diffusion models~\cite{favero2025compositional,bardone2026theory}. Despite this extensive empirical support, a theoretical understanding of these phenomena remains limited and has so far been rigorously established only in simplified learning regimes~\cite{saad1995exact,mei2018mean,bardone2026theory}.

In this work, we show that distributional simplicity bias can be rigorously established for the broad family of energy-based models (EBMs) trained via log-likelihood maximisation. In this framework, probability distributions over discrete configurations $\{{\bm s}\}$ are defined through an energy function $E_{\bm \theta}({\bm s})$, parametrised by a feed-forward neural network, as
$$
P_{\bm \theta}({\bm s}) = \frac{1}{Z_{\bm \theta}} e^{-H_{\bm \theta}({\bm s})}, 
\qquad 
Z_{\bm \theta} = \sum_{{\bm s}} e^{-H_{\bm \theta}({\bm s})},
$$
where $Z_{\bm \theta}$ is the partition function ensuring normalisation.

Since the primary objective of EBMs is to capture the interaction structure among variables through the energy assigned to each configuration, they provide a natural framework to analyse learning dynamics in terms of many-body interactions. In particular, it has been recently shown~\cite{decelle2025inferring} how the energy function can be decomposed into contributions of increasing interaction order,
\begin{align}
    H_{\boldsymbol{\theta}}(\boldsymbol{s}) 
    &= - \sum_{i=1}^N h_i(\boldsymbol{\theta}) s_i 
       - \sum_{i_1 < i_2} J_{i_1 i_2}^{(2)}(\boldsymbol{\theta}) s_{i_1} s_{i_2} 
       - \sum_{i_1 < i_2 < i_3} J_{i_1 i_2 i_3}^{(3)}(\boldsymbol{\theta})  s_{i_1} s_{i_2} s_{i_3} 
       + \dots \nonumber,
\end{align}
where the effective fields $h_i$ and interaction coefficients $J_{i_1 \dots i_n}^{(n)}$ are explicit functions of the model parameters $\bm \theta$. This representation enables a reinterpretation of the EBM as an effective spin model with many-body interactions in the sense of statistical physics. Within this framework, learning can be understood as the progressive incorporation of higher-order interactions into the energy function. We show that gradient-based training induces a hierarchy in which lower-order interactions are learned first and on faster timescales than higher-order ones.

This work shows that, since the effective multi-body spin representation associated with a given data distribution is unique, the learning dynamics of EBMs, which are generally non-convex in their native parametrisation, become locally effectively convex when projected onto the space of effective interaction parameters. In this representation, non-linear EBMs admit data-consistent fixed points, up to flat directions that leave the effective multi-body spin model invariant. Although additional non-data-consistent stationary points may still exist, the hierarchical structure induced by maximum-likelihood gradient dynamics across interaction orders naturally suppresses many spurious solutions. In particular, gradient descent first aligns the model’s low-order moments with those of the data, thereby steering the dynamics toward the data-consistent fixed point.

Importantly, this interaction-based decomposition is not only conceptually meaningful for describing distributional simplicity bias but also operational: it provides a systematic framework for modelling complex systems in terms of effective spin variables, whose physical interaction network can be quantitatively inferred. The effectiveness of this inverse approach has been demonstrated in controlled synthetic settings—where the underlying generative parameters can be accurately recovered~\cite{decelle2024inferring,decelle2025inferring}—as well as in real-world applications, including protein sequence data~\cite{tubiana2019learning,decelle2025inferring}, where structural contacts can be inferred, and neuroscience~\cite{bereux2026uncovering}, where functional connectivity between neurons can be reconstructed.

\textbf{Main results}-- We develop a theoretical framework to analyse the learning dynamics of non-linear EBMs by mapping the energy function to a fully visible Boltzmann machine with arbitrarily high order interactions. Our main contributions are summarised as follows:
\begin{itemize}
    \item We analyse the learning dynamics of EBMs and establish the existence of a marginally stable fixed point at which the model exactly reproduces all the correlations of the target distribution. We further show that this fixed point can be stabilised through an $\ell_2$ regularisation in the effective-coupling space.

    \item We characterise the temporal evolution of the effective couplings during learning. In the large-parameter regime, and under mild assumptions on the sparsity and parametrisation of the ground-truth distribution, we show that the learning dynamics are sequentially ordered with respect to the interaction order $n$: effective couplings associated with lower-order interactions are learned faster than those of higher order. We refer to this phenomenon as the \emph{Distributional Simplicity Bias} (DSB).

\end{itemize}
\textbf{Related works}-- 
Understanding why gradient-based optimisation performs efficiently in highly non-convex machine-learning problems has become a central theoretical question over the last decade. Most existing works, however, focus on supervised or discriminative settings, where the objective function can be evaluated directly, and the optimisation dynamics are decoupled from sampling issues. Early studies, inspired by spin-glass theory, argued that in high-dimensional parameter spaces optimisation difficulties are dominated by saddle points and flat directions rather than by isolated poor local minima~\cite{Choromanska2015}. Subsequent works showed that the stochasticity of SGD naturally destabilises saddle points and guides the dynamics toward broader and more favourable minima~\cite{ge2015escaping,jin2021nonconvex}. More recently, overparameterisation and implicit regularisation have been proposed as key mechanisms explaining why gradient-based optimisation remains tractable despite the underlying non-convexity~\cite{jacot2018neural,mei2019mean,ma2018implicit,zhang2016understanding}. 

In contrast, the theoretical understanding of non-convex optimisation in EBMs remains comparatively limited, despite recent attempts to convexify the learning dynamics of Restricted Boltzmann Machines~\cite{decelle2021exact}. Furthermore, in these models, optimisation is intrinsically coupled to approximate Monte Carlo sampling and to the estimation of an intractable partition function, introducing additional sources of metastability, slow mixing, and dynamical trapping during learning~\cite{Nijkamp2020,Decelle2021,agoritsas23a}. From the perspective of statistical physics, these phenomena are naturally associated with rugged free-energy landscapes, glassy dynamics, and competing metastable states, highlighting deep connections between the learning dynamics of EBMs and disordered systems.

\section{Fully visible Boltzmann machine with higher-order interactions}
\label{fully_visible_Boltzmann_machine_section}
Consider the energy function of a Boltzmann machine with arbitrary high-order interactions
\begin{align}
    H_{\boldsymbol{\phi}}(\boldsymbol{s}) 
    &=  - \sum_{i=1}^N h_i s_i - \sum_{n=2}^N \sum_{i_1 < \dots < i_n} J_{i_1 \dots i_n}^{(n)} \prod_{\nu=1}^{n} s_{i_\nu}, 
    \label{generalized_ising_model_main}
\end{align}
such that the probability mass function of our model is given by a Boltzmann distribution, \( p_{\mathrm{model}}  (\boldsymbol{s}) \propto e^{-H_{\boldsymbol{\phi}}(\boldsymbol{s})}.\)
Here, we use \( \boldsymbol{\phi} \coloneq (\boldsymbol{h}, \boldsymbol{J} )\) to denote the vector of all coupling parameters. In the following, we will use \( \phi_{ I }^{(1)} \coloneq h_i, \) with \( I= \{i\}, \) and \( \phi_{ I }^{(n)} \coloneq J_{i_1 \dots i_n}^{(n)}, \) with \( I \coloneq \{i_1, \dots, i_n \}\), i.e. $n=|I|$, to denote effective fields and $n$-body couplings parameters.  This model could, in principle, encode any strictly positive probability distribution \(p_\mathrm{data} :\{-1, 1\}^N \to (0,1) \). We formalise this statement in the following theorem:

\begin{theorem}
    \textbf{Universal approximation theorem of the fully visible Boltzmann machine with higher-order interaction}. Any probability mass function \( p_\mathrm{data} \colon \{-1, 1 \}^N \to (0,1) \) can be parametrized in terms of an energy \( H_{\boldsymbol{\phi}}\colon \{-1, 1 \}^N \to \mathbb{R} \) with the functional form given by Eq.~\eqref{generalized_ising_model_main}, such that \(p_{\mathrm{data}}(\boldsymbol{s}) \propto e^{-H_{\boldsymbol{\phi}} (\boldsymbol{s})}\), \( \forall \boldsymbol{s} \in \{-1, 1 \}^N.\)
\end{theorem}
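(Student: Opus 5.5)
Since $p_\mathrm{data}$ is strictly positive, the function $f(\boldsymbol{s}) \coloneq -\log p_\mathrm{data}(\boldsymbol{s})$ is a well-defined real-valued function on $\{-1,1\}^N$. The plan is to expand $f$ in the Fourier–Walsh basis of parity functions $\chi_I(\boldsymbol{s}) \coloneq \prod_{i\in I} s_i$, indexed by subsets $I \subseteq \{1,\dots,N\}$. The constant term ($I=\emptyset$) is then absorbed into the normalisation. Every $n$-body monomial appearing in Eq.~\eqref{generalized_ising_model_main} is exactly one $\chi_I$ with $|I|=n$, and the model contains all subsets with $1\le |I|\le N$. It therefore suffices to show that $f$ is a linear combination of the $\chi_I$.

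First, I will show that the $2^N$ functions $\chi_I$ form a basis of the $2^N$-dimensional real vector space of functions on $\{-1,1\}^N$. For this I use the inner product $\langle g,k\rangle = 2^{-N}\sum_{\boldsymbol{s}} g(\boldsymbol{s})k(\boldsymbol{s})$. Because $s_i^2=1$, we have $\chi_I\chi_K = \chi_{I\triangle K}$. For any nonempty $L$,
\begin{equation*}
\sum_{\boldsymbol{s}}\chi_L(\boldsymbol{s}) = \prod_{i\in L}\Big(\sum_{s_i=\pm1} s_i\Big)\prod_{i\notin L} 2 = 0 .
\end{equation*}
Hence $\langle \chi_I,\chi_K\rangle=\delta_{IK}$. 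So the $\chi_I$ are $2^N$ orthonormal, hence linearly independent, vectors in a space of dimension $2^N$, and they form a basis.

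Consequently $f = \sum_I \hat f(I)\chi_I$ with $\hat f(I)=\langle f,\chi_I\rangle$. I then set $h_i \coloneq -\hat f(\{i\})$ and $J^{(n)}_{i_1\dots i_n} \coloneq -\hat f(\{i_1,\dots,i_n\})$ for $n\ge 2$. This gives $H_{\boldsymbol{\phi}}(\boldsymbol{s}) = f(\boldsymbol{s}) - \hat f(\emptyset)$, and therefore
\begin{equation*}
e^{-H_{\boldsymbol{\phi}}(\boldsymbol{s})} = e^{\hat f(\emptyset)}\, p_\mathrm{data}(\boldsymbol{s}),
\end{equation*}
which is the required proportionality, with $Z=e^{\hat f(\emptyset)}$. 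I will also remark that the coefficients are unique up to the irrelevant constant, by linear independence; later sections can use this uniqueness.

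There is no real obstacle in this argument. The only point needing care is the orthogonality computation, which relies on $s_i^2=1$. The hypothesis of strict positivity is essential, since without it $\log p_\mathrm{data}$ would be undefined and zero probabilities could not be produced by a finite energy. An alternative to the orthogonality step is explicit Möbius inversion over subsets, but the orthogonality argument is the cleanest.
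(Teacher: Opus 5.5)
Your proposal is correct and follows the same route as the paper. The paper applies the Fourier expansion theorem for pseudo-Boolean functions (Corollary~\ref{Fourier_expansion_corollary}) to the energy $-\ln p_\mathrm{data}$, which is well defined by strict positivity, and absorbs the constant $\hat{f}(\emptyset)$ term into the normalisation; the only difference is that you prove the basis property of the parity functions yourself, via orthonormality and a dimension count, whereas the paper cites it from the pseudo-Boolean analysis literature.
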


The proof is straightforward from the Fourier Expansion Theorem for pseudo-Boolean functions (see Corollary~\ref{Fourier_expansion_corollary} in Appendix~\ref{pseudo-Boolean_analysis_section}), as soon as $p_{\mathrm{data}}(\boldsymbol{s}) > 0$ for all $\boldsymbol{s} \in \{-1,1\}^N$.

The higher-order Boltzmann machine can be trained by minimising the Kullback-Leibler divergence or, equivalently, the negative log-likelihood with respect to the empirical distribution of the dataset. Such a negative log-likelihood function is given by
\begin{equation}
    -\mathcal{L} = -\sum_{\boldsymbol{s}\in \{-1, 1 \}^N } p_{\mathrm{data}}(\boldsymbol{s}) \left( -H_{\boldsymbol{\phi}}(\boldsymbol{s}) \right) + Z_{\boldsymbol{\phi}}.
    \label{generalized_boltzmann_log-likelihood}
\end{equation}
\begin{theorem} \label{thm:convex1}
\textbf{Convexity of higher-order Boltzmann Machine Learning}. The negative log-likelihood function of a higher-order Boltzmann machine is convex.
\end{theorem}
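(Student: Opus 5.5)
Note first that Eq.~\eqref{generalized_boltzmann_log-likelihood} should read $\log Z_{\boldsymbol{\phi}}$ rather than $Z_{\boldsymbol{\phi}}$; this is the standard negative log-likelihood, and the plan below works with that form. The key observation is that $H_{\boldsymbol{\phi}}$ in Eq.~\eqref{generalized_ising_model_main} is \emph{linear} in the parameters. Writing $\chi_I(\boldsymbol{s}) \coloneq \prod_{i\in I} s_i$ for each nonempty $I\subseteq\{1,\dots,N\}$, we have $-H_{\boldsymbol{\phi}}(\boldsymbol{s}) = \sum_{I} \phi_I \chi_I(\boldsymbol{s}) = \langle \boldsymbol{\phi}, \boldsymbol{\chi}(\boldsymbol{s})\rangle$. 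The model is therefore an exponential family with sufficient statistics $\boldsymbol{\chi}$, and the negative log-likelihood splits as
\begin{equation*}
-\mathcal{L}(\boldsymbol{\phi}) = -\sum_I \phi_I \langle \chi_I\rangle_{\mathrm{data}} + \log \sum_{\boldsymbol{s}} e^{\langle \boldsymbol{\phi},\boldsymbol{\chi}(\boldsymbol{s})\rangle}.
\end{equation*}
The first term is linear, hence convex. Since convexity is preserved under addition, it suffices to show that the log-partition function $A(\boldsymbol{\phi}) \coloneq \log Z_{\boldsymbol{\phi}}$ is convex.

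For $A$, I will compute the Hessian directly. Differentiating once gives $\partial_{\phi_I} A = \langle \chi_I\rangle_{\mathrm{model}}$. Differentiating again gives
\begin{equation*}
\partial_{\phi_I}\partial_{\phi_K} A = \langle \chi_I\chi_K\rangle_{\mathrm{model}} - \langle \chi_I\rangle_{\mathrm{model}}\langle\chi_K\rangle_{\mathrm{model}},
\end{equation*}
which is the covariance matrix of $\boldsymbol{\chi}$ under $p_{\mathrm{model}}$. For any vector $\boldsymbol{v}$, the quadratic form is $\boldsymbol{v}^\top \nabla^2 A\, \boldsymbol{v} = \mathrm{Var}_{\mathrm{model}}\left(\langle \boldsymbol{v},\boldsymbol{\chi}\rangle\right) \ge 0$. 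The Hessian is thus positive semidefinite everywhere, and $A$, and hence $-\mathcal{L}$, is convex on the whole parameter space $\mathbb{R}^{2^N-1}$.

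An alternative route, if one prefers to avoid derivatives, is Hölder's inequality applied to the sum $\sum_{\boldsymbol{s}} e^{\langle \boldsymbol{\phi},\boldsymbol{\chi}\rangle}$ along a segment $t\boldsymbol{\phi}+(1-t)\boldsymbol{\phi}'$. I would mention this only as a remark.

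The only real obstacle is to keep the statement precise. Is the Hessian strictly positive definite? Yes: $p_{\mathrm{model}}$ has full support, and the characters $\{\chi_I\}_{I\neq\emptyset}$ together with the constant function are linearly independent, by the Fourier basis result of Corollary~\ref{Fourier_expansion_corollary}. So $\langle \boldsymbol{v},\boldsymbol{\chi}\rangle$ can only be constant if $\boldsymbol{v}=0$, and its variance vanishes only for $\boldsymbol{v}=0$. This gives \emph{strict} convexity in $\boldsymbol{\phi}$, which is worth stating since the convexity claim is later contrasted with the non-convexity of the neural parametrisation $\boldsymbol{\theta}$.
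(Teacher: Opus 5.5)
Your proposal is correct and follows the paper's argument: because $H_{\boldsymbol{\phi}}$ is linear in the couplings, the Hessian of $-\mathcal{L}$ is the model covariance $\mathrm{Cov}_{\mathrm{model}}(\mathbf{O})$ of the monomials, which is positive semidefinite and in fact strictly positive definite. Your correction of $Z_{\boldsymbol{\phi}}$ to $\ln Z_{\boldsymbol{\phi}}$, and your explicit use of the full support of $p_{\mathrm{model}}$ together with the linear independence of the parities and the constant function, make precise what the paper's one-line strict-definiteness claim leaves implicit.
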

The convexity of the log-likelihood follows directly from the exponential form of the distribution, i.e. from the fact that the energy is linear in the parameters. In this case, the Hessian of the negative log-likelihood can be written as the covariance matrix of the conjugate observables,
\begin{equation}
    \nabla^2_{\boldsymbol{\phi}} (-\mathcal{L})
    =
    \mathrm{Cov}_{\mathrm{model}}(\mathbf{O}),
    \label{Hessian_Log-likelihood}
\end{equation}
where
\[
\mathbf{O}
=
\left(
\prod_{i \in I} s_i
\right)_{I \subseteq [N]}
=
\left(
s_1,\dots,s_N,
s_1s_2,\dots,
\prod_{k=1}^N s_k
\right)
\]
denotes the set of all distinct monomials generated by the variables $s_i$. Since these observables are linearly independent, $\mathrm{Cov}_{\mathrm{model}}(\mathbf{O})$ is strictly positive definite, implying that $-\mathcal{L}_{\boldsymbol{\phi}}$ is strictly convex. Consequently, under standard regularity assumptions, gradient-based optimisation converges to a unique global minimum $\boldsymbol{\phi}^\ast$, where exact moment matching holds
\begin{equation}
    \left\langle \mathbf{O} \right\rangle_\mathrm{data}
    =
    \left\langle \mathbf{O} \right\rangle_\mathrm{model}.
    \label{data-consistent_condition}
\end{equation}

\section{Non-linear energy-based models}
\label{Non-linear_EBM_section}

Since the number of parameters in Eq.~\eqref{generalized_ising_model_main} grows as $\sim 2^{N}$, directly learning such a generalised Boltzmann machine becomes infeasible. Nevertheless, architectures such as Restricted Boltzmann Machines (RBMs), which are universal approximators of discrete probability distributions~\cite{le2008representational,montufar2011refinements,montufar2011expressive}, can implicitly reproduce arbitrarily high-order statistical dependencies without introducing an explicit parameter for each interaction term. Similar phenomenology is also observed empirically in more general neural energy-based models.

In the most general setting, one defines an energy function $E_{\boldsymbol{\theta}}:\{-1,1\}^N \to \mathbb{R}$, differentiable with respect to parameters $\boldsymbol{\theta}$. The associated model distribution is the Boltzmann distribution
 \(p_{\mathrm{model}}(\boldsymbol{s}) \propto e^{-E_{\boldsymbol{\theta}} (\boldsymbol{s})}\). The  log-likelihood then reads
\begin{equation}
   \mathcal{L} = \sum_{\boldsymbol{s}\in \{-1, 1 \}^N } p_{\mathrm{data}}(\boldsymbol{s}) \left( -E_{\boldsymbol{\theta}}  ( \boldsymbol{s}) \right) + \ln Z_{\bm{\theta}}, 
\end{equation}
and the analogous gradient flow in this case is given by
\begin{equation}
    \dot{\boldsymbol{\theta}} =  \nabla_{\boldsymbol{\theta}} \mathcal{L},
    \label{parameters_gradient}
\end{equation}
for which convexity can no longer be established in general, in contrast with Theorem~\ref{thm:convex1}.

In order to analyse the convexity properties of the EBM, we rewrite the Hamiltonian in the functional form of Eq.~\eqref{generalized_ising_model_main}, where each effective coupling \( \phi_{I}^{(n)} \) is expressed as a function of the model parameters \(\boldsymbol{\theta}\) (see Corollary~\ref{Fourier_expansion_corollary} in Appendix~\ref{pseudo-Boolean_analysis_section}). Equation~\eqref{parameters_gradient} can then be rewritten as
\begin{equation}
    \dot{\boldsymbol{\theta}} =
    \left( \frac{\partial \boldsymbol{\phi}}{\partial \boldsymbol{\theta}} \right)^\top
    \nabla_{\boldsymbol{\phi}} \mathcal{L}.
    \label{parameters_gradient_2}
\end{equation}
The fixed points of the learning dynamics therefore satisfy
\begin{equation}
    \left( \frac{\partial \boldsymbol{\phi}}{\partial \boldsymbol{\theta}} \right)^\top
    \nabla_{\boldsymbol{\phi}} \mathcal{L}=0.
\end{equation}
This includes data-consistent fixed points $\boldsymbol{\theta}^{\ast}$, for which $\nabla_{\boldsymbol{\phi}}\mathcal{L}=0$ and Eq.~\eqref{data-consistent_condition} is satisfied. It may also include spurious non-data-consistent fixed points \( \boldsymbol{\theta}^\dagger \), arising whenever $\nabla_{\boldsymbol{\phi}}\mathcal{L}$ belongs to the null space of $\left(\partial \boldsymbol{\phi}/\partial \boldsymbol{\theta}\right)^\top$.
This reflects a geometric constraint induced by the parametrisation $\boldsymbol{\phi}(\boldsymbol{\theta})$, which restricts the admissible directions in $\boldsymbol{\phi}$-space. If the gradient $\nabla_{\boldsymbol{\phi}}\mathcal{L}$ points along a direction orthogonal to the image of the Jacobian, it cannot be realised by any infinitesimal variation of $\boldsymbol{\theta}$. Consequently, the dynamics stall even though the model has not reached a likelihood optimum.

A sufficient condition for the existence of such spurious fixed points is that the Jacobian vanishes,
\begin{equation}
    \left( \frac{\partial \boldsymbol{\phi}}{\partial \boldsymbol{\theta} } \right) = 0.
    \label{spurious_point_condition}
\end{equation}
Using the orthogonality of the Fourier basis, this condition is equivalent to
\begin{align}
    \sum_a \sum_{\substack{I \subseteq [N], \\ |I| \ge 1}}
    \left( \frac{\partial \phi_{I}^{(n)}}{\partial \theta_a} \right)^2
    =
    \sum_a
    \mathrm{Var}_{\boldsymbol{s} \sim \{-1,1\}^N}
    \left[
    \frac{\partial E_{\boldsymbol{\theta}}(\boldsymbol{s})}{\partial \theta_a}
    \right]
    =0.
    \label{spurious_condition_variance}
\end{align}
In practice, this condition typically requires all parameters to vanish in standard EBM architectures, since the derivatives of the energy with respect to the parameters generally depend explicitly on the spin configuration. For instance, in the RBM,
\begin{equation}
    E_{\boldsymbol{\theta}} (\boldsymbol{s})
    =
    \sum_a
    \ln \cosh
    \left(
    \sum_i w_{ia} s_i + \zeta_a
    \right)
    +
    \sum_i \eta_i s_i,
\end{equation}
The condition~\eqref{spurious_point_condition} cannot be strictly satisfied because $\partial \phi_i^{(1)} / \partial \eta_i = 1$. Nevertheless, we find a spurious fixed point $\boldsymbol{\theta}^\dagger$ when the visible fields are adjusted to match the local magnetisations, namely $\eta_i = \operatorname{arctanh}\langle s_i\rangle$, while the remaining parameters ${\boldsymbol{w},\boldsymbol{\zeta}}$ are set to zero.

We now present our first result, regarding the effective stability of data-consistent fixed points:
\begin{theorem}
\label{stability_of_fixed_point_theorem}
    \textbf{Effective stability of data-consistent fixed points}. Let \(E_{\boldsymbol{\theta}}\) be a differentiable EBM with parameters \( \boldsymbol{\theta}\), and let \( \boldsymbol{\theta}^{\ast}\) be {a data-consistent} fixed point in the gradient flow induced dynamics given in Eq.~\eqref{parameters_gradient}. Then, the effective model  \(\boldsymbol{\phi}(\boldsymbol{\theta}^{\ast})\) is locally stable.
\end{theorem}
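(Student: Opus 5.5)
The plan is to linearise the gradient flow of Eq.~\eqref{parameters_gradient} around $\boldsymbol{\theta}^\ast$ and push the linearisation forward to $\boldsymbol{\phi}$-space through the Jacobian $\mathbf{A} \coloneq \partial\boldsymbol{\phi}/\partial\boldsymbol{\theta}$ evaluated at $\boldsymbol{\theta}^\ast$. By Eq.~\eqref{parameters_gradient_2} we have $\dot{\boldsymbol{\theta}} = \mathbf{A}(\boldsymbol{\theta})^\top \nabla_{\boldsymbol{\phi}}\mathcal{L}(\boldsymbol{\phi}(\boldsymbol{\theta}))$. At a data-consistent point $\nabla_{\boldsymbol{\phi}}\mathcal{L}=0$, so in the first-order expansion the term involving the derivative of $\mathbf{A}$ is multiplied by this vanishing gradient and drops out. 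Writing $\boldsymbol{\theta}=\boldsymbol{\theta}^\ast+\boldsymbol{\delta}$, this leaves
\begin{equation*}
\dot{\boldsymbol{\delta}} = -\mathbf{A}^\top \mathbf{C}\,\mathbf{A}\,\boldsymbol{\delta} + O(|\boldsymbol{\delta}|^2),
\qquad \mathbf{C} \coloneq \mathrm{Cov}_{\mathrm{model}}(\mathbf{O}) \text{ at } \boldsymbol{\theta}^\ast .
\end{equation*}
Here $\mathbf{C}$ is the Hessian of $-\mathcal{L}$ in $\boldsymbol{\phi}$ given by Eq.~\eqref{Hessian_Log-likelihood}, and it is strictly positive definite by Theorem~\ref{thm:convex1}.

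Next I analyse the linear operator $\mathbf{M}=\mathbf{A}^\top\mathbf{C}\mathbf{A}$. It is symmetric positive semidefinite, and $\boldsymbol{\delta}^\top\mathbf{M}\boldsymbol{\delta} = (\mathbf{A}\boldsymbol{\delta})^\top \mathbf{C}(\mathbf{A}\boldsymbol{\delta})$. Since $\mathbf{C}\succ 0$, this gives $\ker\mathbf{M}=\ker\mathbf{A}$. The parameter space therefore splits orthogonally into $\ker\mathbf{A}$ and $\mathrm{Im}\,\mathbf{A}^\top$. On $\ker\mathbf{A}$ the perturbations are neutral, but $\delta\boldsymbol{\phi}=\mathbf{A}\boldsymbol{\delta}=0$ there, so the effective model is unchanged to first order. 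On $\mathrm{Im}\,\mathbf{A}^\top$ all eigenvalues of $-\mathbf{M}$ are strictly negative, so those components decay exponentially.

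To state this directly in effective-coupling space, I push forward with $\delta\boldsymbol{\phi}=\mathbf{A}\boldsymbol{\delta}$:
\begin{equation*}
\dot{\delta\boldsymbol{\phi}} = -\mathbf{A}\mathbf{A}^\top\mathbf{C}\,\delta\boldsymbol{\phi}.
\end{equation*}
I then use the Lyapunov function $V=\tfrac12\,\delta\boldsymbol{\phi}^\top\mathbf{C}\,\delta\boldsymbol{\phi}$, which is the quadratic approximation of $\mathcal{L}(\boldsymbol{\phi}^\ast)-\mathcal{L}(\boldsymbol{\phi})$. Its derivative is $\dot V = -\|\mathbf{A}^\top\mathbf{C}\,\delta\boldsymbol{\phi}\|^2 \le 0$. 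Equality holds only if $\mathbf{C}\,\delta\boldsymbol{\phi}\in\ker\mathbf{A}^\top$. For reachable perturbations $\delta\boldsymbol{\phi}=\mathbf{A}\boldsymbol{\delta}$, we have $\delta\boldsymbol{\phi}^\top\mathbf{C}\,\delta\boldsymbol{\phi}=\boldsymbol{\delta}^\top\mathbf{A}^\top\mathbf{C}\,\delta\boldsymbol{\phi}=0$, which forces $\delta\boldsymbol{\phi}=0$. Hence $V$ strictly decreases on every nonzero realisable perturbation, and $\boldsymbol{\phi}(\boldsymbol{\theta}^\ast)$ is locally (asymptotically) stable within the manifold reachable by the parametrisation. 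Globally in $\boldsymbol{\theta}$ it is only marginally stable along the flat directions.

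The main obstacle is making the step from linear to nonlinear stability rigorous when neutral directions are present, because linearisation alone says nothing about zero eigenvalues. I would first try a nonlinear Lyapunov argument using $V(\boldsymbol{\theta})=\mathcal{L}(\boldsymbol{\phi}^\ast)-\mathcal{L}(\boldsymbol{\phi}(\boldsymbol{\theta}))\ge 0$. This function is nonincreasing along the flow since $\dot{\mathcal L}=\|\nabla_{\boldsymbol\theta}\mathcal L\|^2$. Because $\boldsymbol{\phi}^\ast$ is the unique strict global maximiser in $\boldsymbol{\phi}$ (Theorem~\ref{thm:convex1}), strict concavity gives $V\ge c\|\boldsymbol{\phi}-\boldsymbol{\phi}^\ast\|^2$ nearby. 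It follows that $\boldsymbol{\phi}(\boldsymbol{\theta}(t))$ stays close to $\boldsymbol{\phi}^\ast$ for all $t$, which is Lyapunov stability of the effective model, without needing any control of the drift of $\boldsymbol{\theta}$ along flat directions.
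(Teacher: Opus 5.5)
Your proposal is correct. Its first half is the paper's proof: the paper also expands $\nabla^2_{\boldsymbol{\theta}}(-\mathcal{L})$ by the chain rule and drops the term containing $\partial^2\boldsymbol{\phi}/\partial\boldsymbol{\theta}^2$, since $\nabla_{\boldsymbol{\phi}}\mathcal{L}=0$ at $\boldsymbol{\theta}^\ast$. It then obtains $\mathbf{A}^\top\mathrm{Cov}_{\mathrm{model}}(\mathbf{O})\mathbf{A}\succeq 0$ and identifies the neutral directions with $\ker\mathbf{A}$, along which $\boldsymbol{\phi}$ is unchanged to first order.

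The genuine difference is your final step. The paper stops at this linear picture. It asserts that stable perturbations decay back to $\boldsymbol{\theta}^\ast$ and that neutral ones leave $\boldsymbol{\phi}(\boldsymbol{\theta}^\ast)$ invariant. With zero eigenvalues present, a linearisation cannot by itself control second-order changes of $\boldsymbol{\phi}$ along $\ker\mathbf{A}$, nor slow drift of $\boldsymbol{\theta}$ along the flat directions. Your Lyapunov function $V(\boldsymbol{\theta})=\mathcal{L}(\boldsymbol{\phi}^\ast)-\mathcal{L}(\boldsymbol{\phi}(\boldsymbol{\theta}))$ handles exactly this. It is nonincreasing along the exact nonlinear flow, and strict concavity of $\mathcal{L}$ in $\boldsymbol{\phi}$ turns small $V$ into small $\|\boldsymbol{\phi}-\boldsymbol{\phi}^\ast\|$. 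You therefore get a genuine nonlinear Lyapunov-stability statement for the effective model. It needs only first derivatives of $E_{\boldsymbol{\theta}}$ (a continuous vector field and the chain rule), whereas the paper's Hessian route needs second derivatives.

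Three small points make it airtight:
\begin{itemize}
\item You need one of two closing arguments. The first is a first-exit argument: if $\boldsymbol{\phi}(\boldsymbol{\theta}(t))$ left the small ball of radius $r$ where $V\ge c\|\boldsymbol{\phi}-\boldsymbol{\phi}^\ast\|^2$, then at the exit time $V\ge cr^2>V(0)$ for a small enough initial perturbation. The second avoids the quadratic bound: since $-\mathcal{L}$ is strictly convex with minimiser $\boldsymbol{\phi}^\ast$, $V$ is bounded below by a positive constant outside any ball around $\boldsymbol{\phi}^\ast$.
\item Your ``for all $t$'' should be read on the maximal existence interval of the flow.
\item Keep ``asymptotic'' attached only to the linearised statement. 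The nonlinear argument gives stability, not convergence of $\boldsymbol{\phi}(\boldsymbol{\theta}(t))$ to $\boldsymbol{\phi}^\ast$, and stability is all the theorem claims.
\end{itemize}
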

\begin{proof}
Local stability of a fixed-point \( \boldsymbol{\theta}^\ast \) under the dynamics induced by the gradient flow given in Eq.~\eqref{parameters_gradient} would require positive definiteness of the Hessian
 \begin{align}
     \nabla^2_{\boldsymbol{\theta}} (-\mathcal{L}) = \left( \frac{\partial^2 \boldsymbol{\phi}}{\partial \boldsymbol{\theta}^2}\right)^\top \nabla_{\phi} (-\mathcal{L}) +  \left( \frac{\partial \boldsymbol{\phi}}{\partial \boldsymbol{\theta} }  \right)^\top \nabla_{\boldsymbol{\phi}}^2 (-\mathcal{L}) \left( \frac{\partial \boldsymbol{\phi}}{\partial \boldsymbol{\theta} } \right)
     \label{log_likelihood_hessian}
 \end{align}
evaluated at \( \boldsymbol{\theta} = \boldsymbol{\theta}^\ast  \).  Replacing \(\boldsymbol{\theta}^\ast\) in Eq.~\eqref{log_likelihood_hessian} causes the first term on the r.h.s. to vanish, since the condition in Eq.~\eqref{data-consistent_condition} is equivalent to \(\nabla_{\phi} (-\mathcal{L}) = 0 \). Then, we have
\begin{equation}
    \nabla_{\boldsymbol{\theta}}^2 (-\mathcal{L}) \Big|_{\boldsymbol{\theta} = \boldsymbol{\theta}^\ast} \! = \left. \left( \frac{\partial \boldsymbol{\phi}}{\partial \boldsymbol{\theta}} \right)^\top \mathrm{Cov}_ \mathrm{model}(\mathbf{O}) \left( \frac{\partial \boldsymbol{\phi}}{\partial \boldsymbol{\theta}} \right) \right|_{\boldsymbol{\theta} = \boldsymbol{\theta}^\ast}  ,
    \label{parameters_hessian_evaluated}
\end{equation}
We can verify that the r.h.s. of the above expression is positive semi-definite. This implies marginal stability of \(\boldsymbol{\theta}^\ast \). In other words, perturbations around the fixed point are either stable, in the sense that they return to the same fixed point \(\boldsymbol{\theta}^\ast\), or neutral, i.e., neither attracted nor repelled by the fixed point. Since $\mathrm{Cov}_ \mathrm{model}(\mathbf{O})$ is positive definite, the r.h.s. of Eq. \eqref{parameters_hessian_evaluated} is not positive-definite if and only if there are some (neutral) perturbations $\boldsymbol{\theta} = \boldsymbol{\theta}^\ast + \delta \boldsymbol{\theta}$ such that
\begin{equation}
    \left. \frac{\partial \boldsymbol{\phi}}{\partial \boldsymbol{\theta}} \right|_{\boldsymbol{\theta} = \boldsymbol{\theta}^\ast} \delta \boldsymbol{\theta} = 0,
\end{equation}
Since the linear expansion of the effective model evaluated at $\boldsymbol{\theta} = \boldsymbol{\theta}^\ast + \delta \boldsymbol{\theta}$ around $\boldsymbol{\theta}^\ast$ reads
\begin{equation}
     \boldsymbol{\phi} (\boldsymbol{\theta}^\ast + \delta \boldsymbol{\theta}) = \boldsymbol{\phi}(\boldsymbol{\theta}^\ast) + \left. \frac{\partial \boldsymbol{\phi}}{\partial \boldsymbol{\theta}} \right|_{\boldsymbol{\theta} = \boldsymbol{\theta}^\ast} \delta \boldsymbol{\theta} = \boldsymbol{\phi}(\boldsymbol{\theta}^\ast).
\end{equation}
Then, we conclude that the effective model \( \boldsymbol{\phi}(\boldsymbol{\theta}^\ast) \) is locally stable: under gradient descent, perturbations along stable directions of the Hessian given in Eq. \eqref{parameters_hessian_evaluated} decay back to \( \boldsymbol{\theta}^\ast \), while perturbations along neutral directions leave \( \boldsymbol{\phi}(\boldsymbol{\theta}^\ast) \) unchanged.
\end{proof}

A direct consequence of the effective stability theorem for non-linear EBMs is that the attractors of the learning dynamics under gradient ascent need not be isolated fixed points in parameter space, but may instead form a degenerate manifold of fixed points, with every point on this manifold corresponding to the same effective model that reproduces the data statistics.

\section{$\ell_2$-regularization}

In sections \ref{fully_visible_Boltzmann_machine_section}, \ref{Non-linear_EBM_section}, we have assumed a strictly positive probability mass function \(p_{\mathrm{data}}\). This is a strong assumption, since in realistic scenarios we only have access to a finite dataset, leading to an empirical distribution of the form
\[ p_\mathrm{data} (\boldsymbol{s}) = \frac{1}{M}\sum_{d=1}^M \delta(\boldsymbol{s}-\boldsymbol{s}^{(d)}). \]
This distribution is supported only on the observed samples and vanishes elsewhere, i.e., $p_{\mathrm{data}}(\boldsymbol{s}) = 0$ for almost every \(\boldsymbol{s} \in \{-1, 1 \}^N \). As a consequence, likelihood-based training may drive the model to assign vanishing probability mass to unseen configurations, leading to divergent effective couplings in highly expressive models. We can avoid this behaviour by introducing an \(\ell_2\)-penalty in the negative log-likelihood, 
\begin{equation}
    - \mathcal{L}_{\mathrm{ridge}} = -\sum_{\boldsymbol{s}\in \{-1, 1 \}^N } p_{\mathrm{data}}(\boldsymbol{s}) \left( -E_{\boldsymbol{\theta}}  ( \boldsymbol{s}) \right) + \ln Z_{\bm{\theta}} + \frac{\lambda}{2} \sum_{\substack{I \subseteq [N], \\ |I|\ge 1}} \left( \phi_{I}^{(n)} \right)^2. 
\end{equation}
We note that the gradient flow dynamics, given by
\begin{equation}
    \dot{\boldsymbol{\theta}} = \left( \frac{\partial \boldsymbol{\phi}}{\partial \boldsymbol{\theta}} \right)^\top \nabla_{\boldsymbol{\phi}}  \mathcal{L}_{\mathrm{ridge}},
\end{equation}
also have data-consistent fixed-points \(\boldsymbol{\theta}_{\mathrm{ridge}}^\ast \) and spurious fixed-points \(\boldsymbol{\theta}_{\mathrm{ridge}}^\dagger \). The data-consistent fixed-point condition is modified as
\begin{equation}
    \left\langle \boldsymbol{O} \right\rangle_\mathrm{data} = \left\langle \boldsymbol{O} \right\rangle_\mathrm{model} + {\lambda} \boldsymbol{\phi}.
\end{equation}

Similarly, we compute the Hessian of the ridge negative log-likelihood evaluated in \(\boldsymbol{\theta} = \boldsymbol{\theta}_{\mathrm{ridge}}^\ast \),
\begin{equation}
    \nabla_{\boldsymbol{\theta}}^2 (-\mathcal{L}_{\mathrm{ridge}}) \Big|_{\boldsymbol{\theta} = \boldsymbol{\theta}_{\mathrm{ridge}}^\ast} \! = \left. \left( \frac{\partial \boldsymbol{\phi}}{\partial \boldsymbol{\theta}} \right)^\top \left[ \mathrm{Cov}_ \mathrm{model}(\mathbf{O}) + {\lambda}\mathrm{I} \right] \left( \frac{\partial \boldsymbol{\phi}}{\partial \boldsymbol{\theta}} \right) \right|_{\boldsymbol{\theta} = \boldsymbol{\theta}_{\mathrm{ridge}}^\ast},
\end{equation}
from which it follows that the theorem of effective stability of data-consistent fixed points, Theorem~\ref{stability_of_fixed_point_theorem}, also holds in this case. Since the exact computation of \(\sum_{I \subseteq [N], \ |I|\ge 1} \left( \phi_{I}^{(n)} \right)^2\) is computationally prohibitive, an estimator of this quantity is required for practical implementations. Thus, from Parseval's theorem of pseudo-Boolean functions (Corollary~\ref{Parseval_theorem} in appendix~\ref{pseudo-Boolean_analysis_section}), we derive the following relationship:
\begin{align}
    \sum_{\substack{I \subseteq [N], \\ |I|\ge 1}} \left( \phi_{I}^{(n)} \right)^2 
    &= \mathrm{Var}_{\boldsymbol{s} \sim \{-1, 1 \}^N} \left[ E_{\boldsymbol{\theta}} (\boldsymbol{s}) \right],
    \label{ridge_regression_contraction}
\end{align}
which we can use to implement a stochastic estimator using random samples from the uniform distribution. Finally, we mention that using the typical $\ell_2$ regularization on the parameters $\boldsymbol{\theta}$, i.e, 
$+ \sum_a \theta_a^2,$ does not guarantee effective stability of the data consistent fixed-points

\section{Distributional Simplicity Bias (DSB)}
To understand how DSB naturally arises in energy-based learning, we write the equations that describe the dynamics of effective couplings, 
\begin{align}
    \dot{\boldsymbol{\phi}} = \left( \frac{\partial \boldsymbol{\phi} }{\partial \boldsymbol{\theta} }\right) \dot{\boldsymbol{\theta}} = \left( \frac{\partial \boldsymbol{\phi} }{\partial \boldsymbol{\theta} } \right) \left( \frac{\partial \boldsymbol{\phi} }{\partial \boldsymbol{\theta} } \right)^\top \nabla_{\boldsymbol{\phi}} \mathcal{L}.
    \label{effective_couplings_dynamics}
\end{align}
From Eq.~\eqref{effective_couplings_dynamics}, we can write the expression for a single effective coupling
\begin{align}
    &\dot{\phi}_{I}^{(n)} 
   \textstyle = \left[ \nabla_{\boldsymbol{\theta}} {\phi}_{I}^{(n)} \right]^\top  \left( \frac{\partial \boldsymbol{\phi} }{\partial \boldsymbol{\theta} } \right)^\top \nabla_{\boldsymbol{\phi}} \mathcal{L} \nonumber \\
    & \quad\textstyle = \sum_{\substack{J \subseteq [N], \\ |J| \ge 1}} \left( \nabla_{\boldsymbol{\theta}} {\phi}_{I}^{(n)} \cdot \nabla_{\boldsymbol{\theta}} {\phi}_{J}^{(m)} \right) \left[  \left\langle \prod_{j\in J}^m s_{j} \right\rangle_\mathrm{data}  - \left\langle \prod_{j\in J} s_{j} \right\rangle_\mathrm{model} \right] \nonumber \\
    & \quad\textstyle = \left\| \nabla_{\boldsymbol{\theta}} {\phi}_{I} \right\| \sum_{\substack{J \subseteq [N], \\ |J| \ge 1}} \left\| \nabla_{\boldsymbol{\theta}} {\phi}_{J}^{(m)} \right\| \cos \angle \left( \nabla_{\boldsymbol{\theta}} \phi_{I}^{(n)}, \nabla_{\boldsymbol{\theta}} \phi_{J}^{(m)} \right) \left[  \left\langle \prod_{j \in J} s_{j} \right\rangle_\mathrm{data} - \left\langle \prod_{j \in J} s_{j} \right\rangle_\mathrm{model} \right]
    \label{effective_couplings_dynamics_3}.
\end{align}
Assuming that $\nabla_{\boldsymbol{\theta}} \phi_{I}^{(n)}$ and $\nabla_{\boldsymbol{\theta}} \phi_{J}^{(m)}$ are isotropic random vectors in a high-dimensional space, the cosine term in Eq.~\eqref{effective_couplings_dynamics_3} can be approximated, for \( I \neq J \), as
\begin{equation}
    \cos \angle\!\left( \nabla_{\boldsymbol{\theta}} \phi_{I}^{(n)}, \nabla_{\boldsymbol{\theta}} \phi_{J}^{(m)} \right) \sim O \left( N_{\boldsymbol{\theta}}^{-1/2} \right).
    \label{high-dimension_approx}
\end{equation}
Then, using Eq.~\eqref{high-dimension_approx} in Eq.~\eqref{effective_couplings_dynamics_3}, we obtain 
\begin{align}
    \dot{\phi}_{I}^{(n)}  
    &= \textstyle\left\| \nabla_{\boldsymbol{\theta}} {\phi}_{I}^{(n)} \right\| 
    \Bigg[ \left\| \nabla_{\boldsymbol{\theta}} {\phi}_{I}^{(n)} \right\| 
    \left(\left\langle \prod_{i \in I} s_{i} \right\rangle_\mathrm{data} - 
    \left\langle \prod_{i \in I} s_{i} \right\rangle_\mathrm{model} \right) 
    \nonumber \\
    & \textstyle\qquad + O \left( \frac{1}{\sqrt{N_{\boldsymbol{\theta}}}}  \sum_{\substack{J \subseteq [N], \\ |J| \ge 1 \\ J \neq I}}  \left\| \nabla_{\boldsymbol{\theta}} {\phi}_{J}^{(m)} \right\| 
    \left( \left\langle \prod_{j \in J} s_{j} \right\rangle_\mathrm{data} - \left\langle \prod_{j \in J} s_{j} \right\rangle_\mathrm{model} \right) \right) \Bigg].
    \label{effective_couplings_dynamics_4}
\end{align}

From the previous expression, we observe that in the large  $N_{\boldsymbol{\theta}}$ limit, the training dynamics are well approximated by
\begin{equation}
    \dot{\phi}_{I}^{(n)} \approx 
    \left\| \nabla_{\boldsymbol{\theta}} {\phi}_{I}^{(n)} \right\|^2 
    \left[ \left\langle \prod_{i \in I} s_{i} \right\rangle_\mathrm{data}
    - \left\langle \prod_{i \in I} s_{i} \right\rangle_\mathrm{model}
    \right].
    \label{effective_couplings_dynamics_5}
\end{equation}
To be more precise with the kind of limit we are taking in Eq. \eqref{effective_couplings_dynamics_5}, we assume a sparse ground-truth distribution, in the sense that the number of relevant couplings (those with magnitude larger than $\epsilon$) is much smaller than the number of parameters in the model $N_{\boldsymbol{\theta}}$, which itself is far below the $\sim 2^N - 1$ parameters required to specify a fully general Boltzmann machine with interactions up to order $N$.:
$$
\textstyle 2^{N}-1 \gg N_{\boldsymbol{\theta}} \gg \sum_{I \subseteq [N]}\boldsymbol{1}_{\left\{ |\phi_{I}^{(n)}| > \epsilon \right\}}.
$$
Because of Eqs.~\eqref{effective_couplings_dynamics_4} and~\eqref{effective_couplings_dynamics_5}, we argue that DBS in energy-based learning emerges as a consequence of the decay of \(\left\| \nabla_{\boldsymbol{\theta}} {\phi}_{I}^{(n)} \right\|^2 \)  with the order \(n\). Indeed, the \textit{Low-degree Spectral Concentration Theorem} (see Theorem~\ref{low-degree_concentration_theorem} in appendix~\ref{pseudo-Boolean_analysis_section} ) for pseudo-boolean functions allows us to establish the following:
\begin{proposition}
\label{bound_proposition}
\textbf{Upper bound on high-order dynamical terms}. Let \(E_{\boldsymbol{\theta}}\) be a function differentiable EBM with parameters \(\boldsymbol{\theta}\), and let \( \{\phi_I^{(n)} \}_{|I| \ge n} \) be the set of effective couplings of order at least \(n\). Then,
\begin{equation}
   \sum_{\substack{I \subseteq [N], \\ |I| \ge n}} \left\| \nabla_{\boldsymbol{\theta}} {\phi}_{I}^{(n)} \right\|^2 = \sum_a \sum_{\substack{I \subseteq [N], \\ |I| \ge n}} \left(\frac{\partial \phi_I^{(n)}}{\partial \theta_a } \right)^2 \le \frac{1}{n} \sum_{a} \mathbf{I} \left( \frac{\partial E_{\boldsymbol{\theta}}}{\partial \theta_a} \right).\label{eq:bound}
\end{equation}
 Here, \(\mathbf{I}\) is the \textit{total influence} functional for pseudo-boolean functions (see Definition~\ref{influence_definition} in appendix~\ref{pseudo-Boolean_analysis_section}).  
\end{proposition}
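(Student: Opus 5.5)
The plan is to reduce the statement, one parameter at a time, to the Low-degree Spectral Concentration Theorem (Theorem~\ref{low-degree_concentration_theorem}) applied to the pseudo-Boolean function $f_a \coloneq \partial E_{\boldsymbol{\theta}}/\partial \theta_a \colon \{-1,1\}^N \to \mathbb{R}$, and then sum over $a$.

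The first step is to identify the Fourier coefficients of $f_a$. By Corollary~\ref{Fourier_expansion_corollary}, the effective couplings are (up to the sign convention in Eq.~\eqref{generalized_ising_model_main}) the Fourier coefficients of the energy:
\[
E_{\boldsymbol{\theta}}(\boldsymbol{s}) = c(\boldsymbol{\theta}) - \sum_{|I|\ge 1} \phi_I^{(n)}(\boldsymbol{\theta}) \prod_{i\in I} s_i,
\qquad
\phi_I^{(n)} = -\,\mathbb{E}_{\boldsymbol{s}\sim\{-1,1\}^N}\Big[E_{\boldsymbol{\theta}}(\boldsymbol{s}) \prod_{i\in I} s_i\Big].
\]
Since the expectation is a finite sum over $2^N$ configurations, differentiation in $\theta_a$ commutes with it. Hence $\partial \phi_I^{(n)}/\partial\theta_a = -\widehat{f_a}(I)$ for every $|I|\ge 1$, so $\big(\partial \phi_I^{(n)}/\partial\theta_a\big)^2 = \widehat{f_a}(I)^2$. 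The equality in Eq.~\eqref{eq:bound} is just the expansion of the squared gradient norm, with the finite sums exchanged.

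The second step bounds the high-degree Fourier weight of each $f_a$. I would invoke Theorem~\ref{low-degree_concentration_theorem} in the form
\[
\sum_{|I|\ge n} \widehat{f}(I)^2 \le \frac{\mathbf{I}(f)}{n}.
\]
If the appendix states it only in the $\epsilon$-concentration form, I would rederive it directly. Using the spectral formula $\mathbf{I}(f) = \sum_I |I|\,\widehat{f}(I)^2$ (Definition~\ref{influence_definition} together with Parseval, Corollary~\ref{Parseval_theorem}), we get
\[
n\sum_{|I|\ge n}\widehat{f}(I)^2 \le \sum_{|I|\ge n}|I|\,\widehat{f}(I)^2 \le \mathbf{I}(f),
\]
where the last inequality drops nonnegative terms. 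Applying this to each $f_a$ and summing over $a$ gives Eq.~\eqref{eq:bound}.

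The main obstacle is the convention check in the spectral formula for the total influence. If the appendix defines influence through discrete derivatives $D_i f$, I must verify that $\mathbf{I}(f)=\sum_I |I|\widehat f(I)^2$ holds with the paper's normalisation. This identity follows from $\widehat{D_i f}(I\setminus\{i\}) = \widehat f(I)$ for $i\in I$, combined with Parseval. Everything else in the argument is a finite-sum manipulation.
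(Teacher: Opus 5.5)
Your proposal is correct and follows the paper's route. Like the paper, you identify $\partial\phi_I^{(n)}/\partial\theta_a$ with minus the Fourier coefficients of $\partial E_{\boldsymbol{\theta}}/\partial\theta_a$, apply the Low-degree Spectral Concentration Theorem to each such function, and sum over $a$; your rederivation from $\mathbf{I}[f]=\sum_I |I|\,\hat f(I)^2$ (Theorem~\ref{total_influence_formula}, whose normalisation matches the paper's $\mathbf{Inf}_i$) is exactly how that theorem follows, and it also supplies the correct constant $n$ in place of the stray $k$ in the appendix statement.
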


Here, we have used the following definition for the influence:
\begin{definition}
\label{influence_definition}
    The \textbf{influence} of coordinate \( i \) on \( f \colon \{-1, 1 \}^N \to \mathbb{R} \) is defined to be
    \[ \mathbf{Inf}_i[f] \coloneq \frac{1}{2^{N}} \!\!\!\! \sum_{\boldsymbol{s'} \in \{-1, 1\}}^N \left[ \frac{f\left( \boldsymbol{s}^{(i \to 1)} \right) - f \left(\boldsymbol{s}^{(i \to -1)} \right)}{2}   \right]^2. \]
    Here we have used the notation $\boldsymbol{s}^{(1 \to b)} \coloneq (s_1, \dots, s_{i-1}, b, s_{i+1}, \dots, s_N)$. Additionally, we define the \textbf{total influence} of $f$ to be
    \[\mathbf{I}[f] = \sum_{i=1}^N \mathbf{Inf}_i [f]. \]
\end{definition}
In the r.h.s. of the inequality in Eq.~\eqref{eq:bound}, when the total influence satisfies \( \mathbf{I}(\partial E_{\boldsymbol{\theta}}/\partial \theta_a) \sim O(N) \), as we found, for instance, in the RBM case, this term bounds the l.h.s., which involves a sum over \( \sum_{k=n}^N \binom{N}{k} \) elements. Additionally, the factor \(1/n\) in the r.h.s. implies that the bound decreases strictly with the order.

The key insight that follows from Eqs.~\eqref{effective_couplings_dynamics_5} and~\eqref{eq:bound} is that gradient descent does not learn all effective interactions uniformly. Proposition~\ref{bound_proposition} shows that the total dynamical weight associated with effective couplings of order at least \(n\) is bounded as in Eq.~\eqref{eq:bound}, implying a progressive suppression of higher-order contributions. As a result, higher-order effective couplings are learned more slowly during training than lower-order ones. This naturally induces a hierarchical learning dynamics in the effective interaction space. Lower-order terms, such as one-body and two-body interactions, are typically learned first, whereas higher-order couplings emerge only at later stages of training. Importantly, this hierarchy is not solely determined by the structure of the data correlations, but is also a direct consequence of the gradient-descent dynamics itself. Even when the data contain correlations across many orders, the dynamics intrinsically favour the learning of the lowest-order effective interactions at early times.

The same hierarchy is reflected in the evolution of the model correlation functions. Indeed,
\begin{equation} \langle \dot{\boldsymbol{O}} \rangle_{\mathrm{model}} = \left( \frac{\partial \langle {\boldsymbol{O}} \rangle_{\mathrm{model} }}{\partial \boldsymbol{\phi}} \right) \dot{\boldsymbol{\phi}} = \mathrm{Cov}_{\mathrm{model}}(\boldsymbol{O}) \, \dot{\boldsymbol{\phi}}, \end{equation}
At early stages of training, the model is typically close to a weakly correlated distribution, for which
$\mathrm{Cov}_{\mathrm{model}}(\boldsymbol{O}) \approx \mathrm{I}$. 
In this regime, changes in the effective parameters are therefore transferred almost directly to the corresponding moments. As a consequence, the low-order moments of the model align with those of the data first, while higher-order correlations are incorporated progressively.

This prioritisation of low-order moments is also important for generalisation. In realistic datasets, the dominant and most reliably estimable statistical structure is often contained in low-order correlations. The dynamics, therefore, act as an implicit regularisation mechanism: they first fit the most robust collective features of the data before introducing higher-order corrections.

Finally, this hierarchy constrains the influence of spurious stationary points on the learning dynamics. Since low-order contributions dominate the gradient norm during the early and intermediate stages of training, cancellation of the total gradient generically requires the corresponding low-order moment mismatches to be simultaneously reduced. As a consequence, stationary points that fail to reproduce the low-order statistics of the data become dynamically difficult to reach. Although the bound does not rigorously exclude the existence of non-data-consistent stationary points, it implies that gradient descent is naturally biased toward trajectories that progressively align the low-order moments of the model with those of the data, thereby approaching the data-consistent fixed point.

\begin{figure}
    \centering
    \includegraphics[width=1\linewidth]{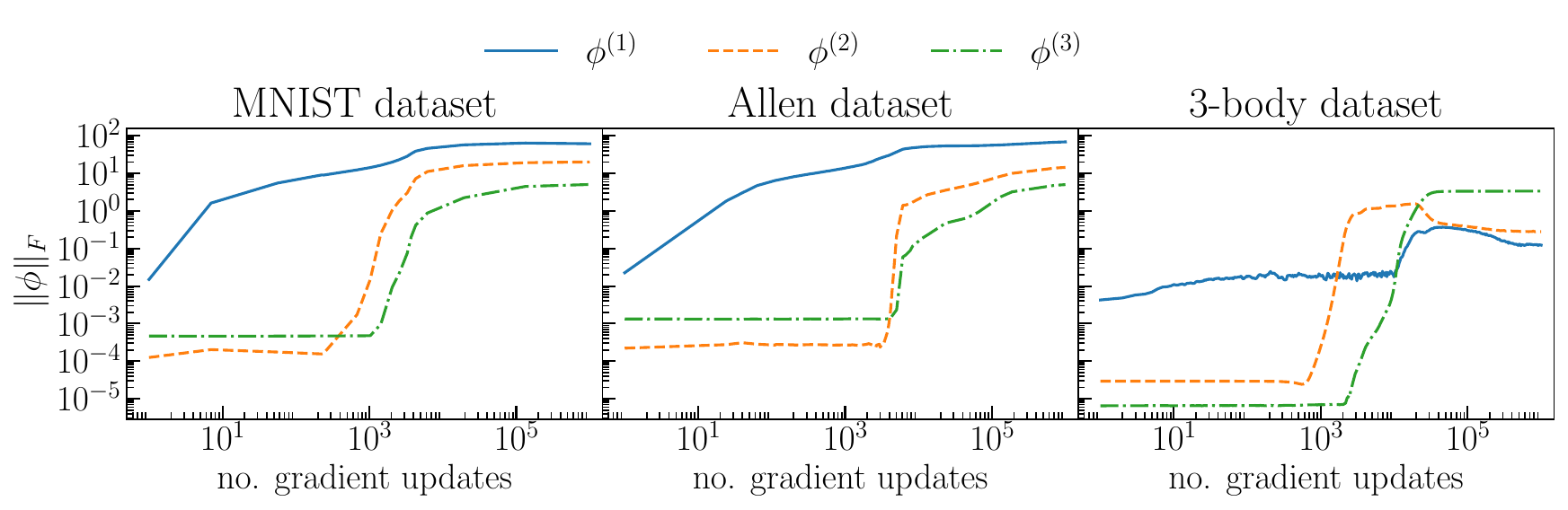}
    \caption{Frobenius norm of the effective parameters, extracted from three distinct RBM trainings using the mapping of Ref.~\cite{decelle2024inferring}, at orders $n=1,2,3$ as a function of the number of gradient updates. A clear sequential learning dynamics is observed: first the fields are learned, followed by the two-body interactions, and finally the three-body interactions.
}
    \label{fig:frobenius}
\end{figure}

\section{Numerical Experiments}
In order to validate and illustrate our theoretical results, we perform simulations with RBMs on three datasets: (i) MNIST~\cite{lecun1989backpropagation} binarised to ${0,1}$ variables, (ii) a dataset of neuronal activity recorded in the Allen Institute Visual Behaviour Neuropixels dataset~\cite{AllenData}, and (iii) a synthetic spin system with Hamiltonian
\begin{equation}
H_{\mathrm{data}}(\boldsymbol{s}) = -\beta \sum_{i=1}^N s_i s_{i+1} s_{i+2},\label{eq:3-spin}
\end{equation}
with $\beta=0.5$, $N=48$, and periodic boundary conditions ($s_{N+1}=s_1$).
All three datasets exhibit non-trivial higher-order structure, so that their empirical distributions cannot be captured by pairwise Boltzmann machines. Details of the training procedure are provided in the Appendix~\ref{Ap:training}.

In Fig.~\ref{fig:frobenius}, we report the Frobenius norm of the $n$-th order effective couplings, computed using the relations from~\cite{decelle2024inferring,decelle2025inferring} (reproduced in the Appendix~\ref{Ap:couplings}), for $n=1,2,3$:
\begin{equation*}
\textstyle\lVert \boldsymbol{\phi}^{(n)} \rVert_F
=
\left(
\sum_{\substack{I \subseteq [N], \\ |I|=n}}
\left|\phi_I^{(n)}\right|^2
\right)^{1/2}.
\end{equation*}
We clearly observe the effect of DSB: the bias terms are learned first, followed by pairwise interactions, and finally, third-order couplings. This provides empirical evidence that interactions are learned sequentially in increasing order $n$.

Even more strikingly, in the third dataset—where the ground-truth model contains only three-body couplings but nevertheless exhibits non-zero covariances (Fig.~\ref{fig:3-coupling}–Left)—the learning dynamics proceed through distinct stages. First, the model develops small effective fields, followed by the emergence of pairwise couplings that partially account for the observed covariance structure. These two-body interactions are subsequently suppressed as the model converges toward the true three-body interactions.

Finally, Fig.~\ref{fig:3-coupling}–Right shows that two independent trainings on the same dataset yield nearly identical effective parameters, both consistent with the ground-truth model. Exact agreement is not expected due to finite-sample effects, the use of mini-batches for gradient estimation, and the stochastic nature of negative-phase sampling.

\begin{figure}
    \centering
    \includegraphics[height=4.5cm]{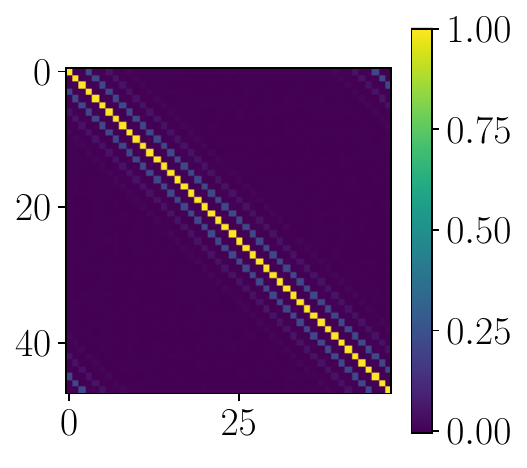}
    \includegraphics[height=4cm]{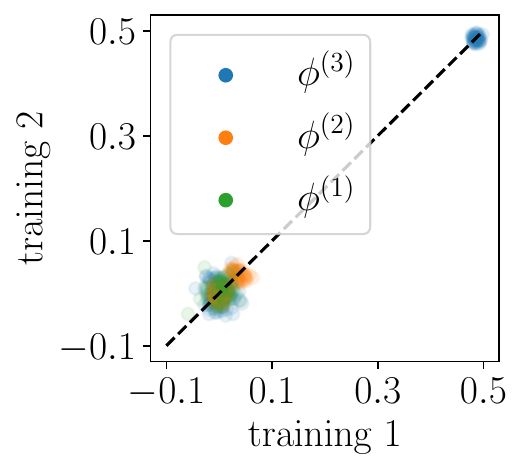}
    \caption{(Left) Covariance matrix of the data generated with the model in Eq.~\eqref{eq:3-spin}. (Right) We show a scatter plot of all $\phi_I^{(n)}$ coefficients inferred at the end of two independent trainings on the same synthetic 3-body dataset. The strong agreement between the two trainings indicates that the learned models are essentially identical and closely match the ground-truth model, which consists solely of a subset of three-body couplings of strength $\beta=0.5$.}
    \label{fig:3-coupling}
\end{figure}

\section{Conclusion}
In this work, we investigated the learning dynamics of EBMs by focusing on the space of effective parameters rather than the weight space. This parametrisation provides a natural framework, as it embeds the model within the exponential family, rendering the likelihood convex in the effective parameters. We showed that likelihood minima that reproduce the empirical statistics are generically marginally stable, but can be further stabilised through $\ell_2$ regularisation of the effective parameters, which can be implemented efficiently. Finally, we argued that the learning dynamics typically exhibit a distributional simplicity bias, which may explain why spurious fixed points of the gradient dynamics are rarely encountered in practice.

Throughout this study, we assumed that the EBM can, in principle, match all moments of the empirical distribution, or equivalently, that the number of model parameters exceeds the number of effective parameters required to describe the ground-truth distribution. While such overparameterised regimes are common in modern machine learning, this assumption is not always satisfied in practice. In the case of RBMs, for instance, computational constraints—most notably the slowing down of sampling dynamics—often limit the feasible number of parameters. It is therefore of considerable interest to understand the extent to which our results extend to more realistic settings in which the model cannot exactly reproduce all moments of the empirical distribution.

\section{Acknowledgment}
The authors acknowledge financial support from grant PID2024-158623NB-C21, funded by MICIU/AEI/10.13039/501100011033 and by ERDF/EU. AJNG acknowledges support from the Comunidad de Madrid under predoctoral contract PIPF-2024/COM-35075.

\bibliographystyle{unsrt}
\bibliography{references}

\appendix

\section{Elements of Pseudo-Boolean function analysis}
\label{pseudo-Boolean_analysis_section}
This appendix collects the theorems of pseudo-Boolean function analysis that underpin the results presented in the main text. Full proofs and additional technical details are deferred to Ref.~\cite{ODonnell_2014}, the standard textbook on the subject.

\textit{Pseudo-Boolean} functions map binary vectors to real numbers:
\[
f \colon \{-1,1\}^N \to \mathbb{R}.
\]
Any energy function defined on binary variables, including the celebrated Ising Hamiltonian,
\[
\mathcal{H}(\boldsymbol{s}) = - \sum_{i<j} J_{ij} s_i s_j - \sum_i h_i s_i,
\]
is a pseudo-Boolean function. The \textit{Fourier Expansion Theorem} of pseudo-Boolean functions implies that any energy function of binary vectors can be expressed as a generalised Ising model with higher-order interactions.

\begin{definition}
\textbf{Parity functions.} For \( I \subseteq \{1, 2, \dots, N \} \coloneq [N]\), we define the parity functions as \( \chi_\emptyset (\boldsymbol{s}) \coloneq 1 \) and \( \chi_I( \boldsymbol{s}) \coloneq \prod_{i \in I} s_i \), if  \( I \neq \emptyset\).
\end{definition}

\begin{theorem}
    \textbf{Fourier Expansion Theorem.} Every function \(f \colon \{ -1, 1\}^N \to \mathbb{R} \) admits a unique expansion in terms of parity functions,
    \begin{equation}
    f(\boldsymbol{s}) = \sum_{I \subseteq [N]} \hat{f}(I) \chi_I( \boldsymbol{s})
    \label{fourier_expansion}
    \end{equation}
    This expression is called the \textit{Fourier expansion} of $f$, where the real number $\hat{f} ({I})$ is the \textit{Fourier coefficient} of $f$ on $I \subseteq [N]$. These Fourier coefficients are given by
\begin{equation}
    \hat{f}(I) = \frac{1}{2^N} \sum_{\boldsymbol{s'} \in \{-1, 1 \}^N} f (\boldsymbol{s}') \chi_I( \boldsymbol{s'})
    \label{fourier_coefficient}
\end{equation}
\label{Fourier_expansion_theorem}
\end{theorem}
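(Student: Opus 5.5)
We prove the Fourier Expansion Theorem with a standard linear-algebra argument: the parity functions form an orthonormal basis of the space of pseudo-Boolean functions under the uniform inner product
\[
\langle f, g\rangle \coloneq \frac{1}{2^N}\sum_{\boldsymbol{s}\in\{-1,1\}^N} f(\boldsymbol{s})\, g(\boldsymbol{s}).
\]
Existence, uniqueness and the coefficient formula in Eq.~\eqref{fourier_coefficient} then follow at once.

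\textbf{Step 1: orthonormality.} For $I,J\subseteq[N]$ we have $\chi_I\chi_J=\chi_{I\triangle J}$, because $s_i^2=1$ for $s_i\in\{-1,1\}$. It therefore suffices to show that $\frac{1}{2^N}\sum_{\boldsymbol{s}}\chi_K(\boldsymbol{s})$ equals $1$ if $K=\emptyset$ and $0$ otherwise. The sum factorises over coordinates as $\prod_{i\in K}\bigl(\sum_{s_i=\pm1}s_i\bigr)\cdot 2^{N-|K|}$. This vanishes as soon as $K$ is nonempty, since each factor $\sum_{s_i=\pm1} s_i$ is $0$. Hence $\langle\chi_I,\chi_J\rangle=\delta_{IJ}$.

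\textbf{Step 2: basis.} The space of functions $\{-1,1\}^N\to\mathbb{R}$ is a real vector space of dimension $2^N$; the indicator functions of single points are a basis. There are exactly $2^N$ parity functions, one per subset of $[N]$. Orthonormal vectors are linearly independent, so these $2^N$ functions form a basis. Consequently every $f$ admits an expansion $f=\sum_I c_I\chi_I$ as in Eq.~\eqref{fourier_expansion}, and the coefficients $c_I$ are unique by linear independence.

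\textbf{Step 3: coefficient formula.} Take the inner product of the expansion with $\chi_I$. Orthonormality gives $c_I=\langle f,\chi_I\rangle$, which is precisely Eq.~\eqref{fourier_coefficient}.

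The only step needing care is the counting and independence argument in Step 2. A direct alternative avoids the dimension count: expand each point indicator as
\[
\mathbf{1}_{\boldsymbol{a}}(\boldsymbol{s})=\prod_i\frac{1+a_i s_i}{2},
\]
which exhibits every function explicitly as a polynomial that is multilinear in the $s_i$. I expect no real obstacle; this is textbook material (Ref.~\cite{ODonnell_2014}).
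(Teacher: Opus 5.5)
Your proof is correct and is essentially the standard argument the paper relies on. The paper does not prove this theorem itself but defers to O'Donnell; the ingredients it lists in the appendix (the rule $\chi_I\chi_J=\chi_{I\triangle J}$, the vanishing average of non-constant parity functions, and the resulting orthogonality) are exactly your Step~1, and your alternative via $\mathbf{1}_{\boldsymbol{a}}(\boldsymbol{s})=\prod_i (1+a_i s_i)/2$ is O'Donnell's own existence argument, with uniqueness and the coefficient formula then following from orthonormality as in your Steps~2--3.
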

\begin{corollary}
\label{Fourier_expansion_corollary}
Consider an energy function \(E_{\boldsymbol{\theta}} \colon \{-1, 1 \}^N \to \mathbb{R} \), then we can apply the following expansion
\begin{align}
    E_{\boldsymbol{\theta}}(\boldsymbol{s}) 
    & \equiv -\sum_{i=1}^N {h_i (\boldsymbol{ \theta}}) \ s_i -\sum_{i<j} J^{(2)}_{ij} (\boldsymbol{\theta}) \ s_i s_j - \sum_{i<j<k} J_{ijk}^{(3)} (\boldsymbol{\theta}) \ s_i s_j s_k + \dots \nonumber \\
    &= \sum_{i=1}^N h_i (\boldsymbol{\theta}) \ s_i +  \sum_{n=2}^N \sum_{i_1 < \dots < i_n} J_{i_1 \dots i_n}^{(n)} (\boldsymbol{\theta}) \ \prod_{k=1}^n s_{i_k}.
    \label{generalized_ising_model}
\end{align}
Where, \( h_{i} (\boldsymbol{\theta}) \coloneq -\hat{E}_{\boldsymbol{\theta}}(\{ i \}) \) and \(  J_{i_1 \dots i_n}^{(n)} (\boldsymbol{\theta}) \coloneq - \hat{E}_{\boldsymbol{\theta}}( \{i_1, \dots, i_n \}). \)  
Note that the term zero order term \(\hat{E}_{\boldsymbol{\theta}} (\emptyset) \)  is missing from the r.h.s. of~\eqref{generalized_ising_model} as it is a constant that does not affect the Boltzmann distribution associated with $E_{\boldsymbol{\theta}}$. 
\end{corollary}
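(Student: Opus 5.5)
The plan is to fix the parameters $\boldsymbol{\theta}$ and apply the Fourier Expansion Theorem (Theorem~\ref{Fourier_expansion_theorem}) to the single pseudo-Boolean function $f = E_{\boldsymbol{\theta}}\colon\{-1,1\}^N\to\mathbb{R}$. Everything then reduces to relabelling the Fourier coefficients, grouping them by the size of the index set, and discarding the constant term.

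First, by Theorem~\ref{Fourier_expansion_theorem} we have $E_{\boldsymbol{\theta}}(\boldsymbol{s}) = \sum_{I\subseteq[N]} \hat{E}_{\boldsymbol{\theta}}(I)\,\chi_I(\boldsymbol{s})$. The coefficients are given explicitly by Eq.~\eqref{fourier_coefficient}, namely $\hat{E}_{\boldsymbol{\theta}}(I) = 2^{-N}\sum_{\boldsymbol{s}'} E_{\boldsymbol{\theta}}(\boldsymbol{s}')\chi_I(\boldsymbol{s}')$. This formula shows that each coefficient is a finite linear combination of the values $E_{\boldsymbol{\theta}}(\boldsymbol{s}')$. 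Hence each coefficient is a well-defined function of $\boldsymbol{\theta}$, and it inherits the differentiability of $E_{\boldsymbol{\theta}}$. This is what is needed later for Jacobians such as $\partial\boldsymbol{\phi}/\partial\boldsymbol{\theta}$.

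Second, I split the sum according to $|I|$:
\begin{equation*}
E_{\boldsymbol{\theta}}(\boldsymbol{s}) = \hat{E}_{\boldsymbol{\theta}}(\emptyset) + \sum_{n=1}^{N}\ \sum_{|I|=n} \hat{E}_{\boldsymbol{\theta}}(I)\prod_{i\in I}s_i .
\end{equation*}
Writing each $I$ of size $n$ as $\{i_1<\dots<i_n\}$ and setting $h_i(\boldsymbol{\theta}) \coloneq -\hat{E}_{\boldsymbol{\theta}}(\{i\})$ and $J^{(n)}_{i_1\dots i_n}(\boldsymbol{\theta}) \coloneq -\hat{E}_{\boldsymbol{\theta}}(\{i_1,\dots,i_n\})$ yields the form of Eq.~\eqref{generalized_ising_model_main}, with the minus-sign convention, up to the additive constant $\hat{E}_{\boldsymbol{\theta}}(\emptyset)$. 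Uniqueness of the expansion guarantees that these effective couplings are uniquely determined by $E_{\boldsymbol{\theta}}$.

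Third, I will justify dropping $\hat{E}_{\boldsymbol{\theta}}(\emptyset)$. The factor $e^{-\hat{E}_{\boldsymbol{\theta}}(\emptyset)}$ multiplies both $e^{-E_{\boldsymbol{\theta}}(\boldsymbol{s})}$ and $Z_{\boldsymbol{\theta}}$, so it cancels in $p_{\mathrm{model}}$. The ``$\equiv$'' in the statement should therefore be read as equality up to an additive constant, that is, equality of the induced Boltzmann distributions.

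No step is a genuine obstacle. The only care needed is bookkeeping: the sign convention, and the second displayed line, which as written has the signs flipped relative to the first and should be read with minus signs, consistent with the definitions of $h_i$ and $J^{(n)}$.
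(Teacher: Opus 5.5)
Your proposal is correct and follows the paper's own route: the paper gives no separate argument and treats the statement as immediate from the Fourier Expansion Theorem applied to $E_{\boldsymbol{\theta}}$ at fixed $\boldsymbol{\theta}$, with the coefficients relabelled as $h_i$, $J^{(n)}$ and the constant $\hat{E}_{\boldsymbol{\theta}}(\emptyset)$ dropped because it cancels in the Boltzmann distribution. You are also right that the second displayed line has its signs flipped relative to the first line and to the definitions of $h_i(\boldsymbol{\theta})$ and $J^{(n)}(\boldsymbol{\theta})$, and that ``$\equiv$'' must be read as equality up to an additive constant.
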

From a linear algebra perspective, the set of all pseudo-Boolean functions forms a $2^N$-dimensional vector space. A natural basis for this space is given by the parity (Fourier) functions, which are linearly independent. In what follows, we introduce an inner product on this space and show that these parity functions in fact form an orthogonal basis of it.
\begin{definition}
\textbf{Inner Product.} Given two functions \( f, g \colon \{-1, 1\}^N \to \mathbb{R} \), we define the inner product on them by
\begin{equation}
    \langle f, g  \rangle = \frac{1}{2^N} \sum_{\boldsymbol{s}' \in \{-1, 1 \}^N }  f(\boldsymbol{s}') g(\boldsymbol{s}').
\end{equation}
\end{definition}
\begin{proposition}
For \( \boldsymbol{s} \in \{-1, 1\}^N \) it holds that \(\chi_I (\boldsymbol{s}) \chi_J (\boldsymbol{s}) = \chi_{I \triangle J} (\boldsymbol{s})\)· Where we have introduce the symmetric difference \( I \triangle J = (I\cup J ) \setminus (I \cap J) \). 
\end{proposition}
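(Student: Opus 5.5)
The plan is to verify the identity $\chi_I(\boldsymbol{s})\chi_J(\boldsymbol{s}) = \chi_{I \triangle J}(\boldsymbol{s})$ pointwise for every fixed $\boldsymbol{s} \in \{-1,1\}^N$. The argument is purely algebraic and uses only the definition of the parity functions and the fact that each coordinate satisfies $s_i^2 = 1$.

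First I will dispose of the degenerate cases involving the empty set.
\begin{itemize}[label={}]
\item If $I=\emptyset$, then $\chi_\emptyset = 1$ and $I \triangle J = J$, so the identity is immediate.
\item The case $J = \emptyset$ is symmetric.
\item When $I \triangle J = \emptyset$ (i.e.\ $I = J$), the convention $\chi_\emptyset = 1$ must also be consistent. This is covered by the general computation below, provided we read the empty product as $1$.
\end{itemize}
To keep the argument uniform, I will adopt throughout the convention $\chi_K(\boldsymbol{s}) = \prod_{i\in K} s_i$ with the empty product equal to $1$, which agrees with the definition.

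For general $I, J \subseteq [N]$, I will split the index sets into the disjoint pieces $I\setminus J$, $J \setminus I$ and $I\cap J$. Since multiplication of reals is commutative and associative, this gives
\[
\chi_I(\boldsymbol{s})\chi_J(\boldsymbol{s}) = \prod_{i\in I\setminus J} s_i \prod_{i\in J\setminus I} s_i \prod_{i \in I\cap J} s_i^2 .
\]
Because $s_i \in \{-1,1\}$, each factor $s_i^2$ equals $1$. The remaining product runs over $(I\setminus J)\cup(J\setminus I)$, which is a disjoint union equal to $(I\cup J)\setminus(I\cap J) = I\triangle J$. This yields $\chi_{I\triangle J}(\boldsymbol{s})$.

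There is no real obstacle here. The only point needing care is the set-theoretic identity $(I\setminus J)\cup(J\setminus I) = (I\cup J)\setminus(I\cap J)$, which I will justify by element chasing: an element lies in exactly one of $I, J$ if and only if it lies in the union but not the intersection. The other point is handling the empty-product convention so that the cases $I=J$ and $I$ or $J$ empty need no separate treatment. I will also note that this proposition is what will later give orthogonality, since it implies $\langle \chi_I,\chi_J\rangle = \mathbb{E}[\chi_{I\triangle J}]$, which vanishes unless $I=J$.
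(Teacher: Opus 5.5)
Your proof is correct. The paper gives no proof of its own and defers to O'Donnell's textbook, where the argument is exactly yours: split $I$ and $J$ into $I\setminus J$, $J\setminus I$ and $I\cap J$, then use $s_i^2=1$ on the intersection, reading the empty product as $1$ so that the cases involving $\emptyset$ need no separate treatment.
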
 
\begin{proposition}
\( 2^{-N} \sum_{\boldsymbol{s} \in \{ -1, 1\}^N} \chi_I (\boldsymbol{s}) = \begin{cases}
    1 & \mathrm{if} \ I = \emptyset, \\
    0 & \mathrm{if} \ I \neq \emptyset.
\end{cases}\)
\end{proposition}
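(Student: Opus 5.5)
The plan is to treat the two cases of the statement separately. In each case I use only the definition of the parity functions $\chi_I$ and the product structure of the hypercube $\{-1,1\}^N$.

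\emph{Empty set.} If $I=\emptyset$, then by definition $\chi_\emptyset(\boldsymbol{s})=1$ for every $\boldsymbol{s}$. The sum therefore counts the $2^N$ configurations, and the normalised average is $2^{-N}\cdot 2^N=1$.

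\emph{Nonempty set.} If $I\neq\emptyset$, I would use the fact that the uniform sum over $\{-1,1\}^N$ factorises over coordinates. Since $\chi_I(\boldsymbol{s})=\prod_{i\in I}s_i$ depends only on the coordinates in $I$, I write
\begin{equation*}
\sum_{\boldsymbol{s}\in\{-1,1\}^N}\chi_I(\boldsymbol{s})
=\prod_{i\in I}\Bigl(\sum_{s_i\in\{-1,1\}} s_i\Bigr)\prod_{j\notin I}\Bigl(\sum_{s_j\in\{-1,1\}} 1\Bigr)
=2^{N-|I|}\prod_{i\in I}\bigl(1+(-1)\bigr).
\end{equation*}
Because $I$ is nonempty, the product over $I$ has at least one factor, and each factor equals $0$. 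Hence the whole sum vanishes, and so does its average.

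As a cross-check that avoids the factorisation, I would also give an involution argument. Fix some $i\in I$ and let $\tau$ be the map that flips the $i$-th spin. This map is a bijection of $\{-1,1\}^N$ onto itself, and it satisfies $\chi_I(\tau\boldsymbol{s})=-\chi_I(\boldsymbol{s})$. Reindexing the sum by $\tau$ shows that the sum equals its own negative, so it is $0$.

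Neither step is a real obstacle. The only point needing care is justifying the interchange of the sum with the product, i.e. Fubini on a finite product set. This is immediate because the summand is a product of single-coordinate functions.
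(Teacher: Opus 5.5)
Your proof is correct. The case $I=\emptyset$ is immediate, and for $I\neq\emptyset$ both the factorisation $\sum_{\boldsymbol{s}}\chi_I(\boldsymbol{s})=2^{N-|I|}\prod_{i\in I}(1+(-1))$ and the spin-flip involution settle it; the paper gives no proof of its own and defers to O'Donnell's textbook, where the standard argument is your factorisation (a product of independent uniform $\pm 1$ coordinates, each with mean zero).
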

From the above propositions and the definition of the inner product, it follows the orthogonality condition of parity functions.
\begin{theorem}
    \textbf{Orthogonality of parity functions.} The \(2^N\) parity functions satisfy the orthogonality condition \[ \langle \chi_I, \chi_J \rangle = 
    \begin{cases}
        1 & \mathrm{if} \ I=J, \\
        0 & \mathrm{if} \ I \neq J.
    \end{cases}. \]
\end{theorem}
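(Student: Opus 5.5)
The plan is a direct verification from the definition of the inner product and the two preceding propositions. Expanding the definition gives
\[
\langle \chi_I, \chi_J \rangle = \frac{1}{2^N} \sum_{\boldsymbol{s} \in \{-1,1\}^N} \chi_I(\boldsymbol{s}) \chi_J(\boldsymbol{s}).
\]

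First we use the multiplicativity proposition to replace the pointwise product: $\chi_I(\boldsymbol{s})\chi_J(\boldsymbol{s}) = \chi_{I \triangle J}(\boldsymbol{s})$ for every $\boldsymbol{s}$. This identity is elementary but deserves a word, since it is the only point where the $\pm 1$ structure enters. Write
\[
\chi_I \chi_J = \prod_{i \in I \cap J} s_i^2 \prod_{i \in I \triangle J} s_i ,
\]
and use $s_i^2 = 1$. The convention $\chi_\emptyset = 1$ handles the case where one of the sets, or the symmetric difference, is empty.

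Substituting, the inner product becomes
\[
\langle \chi_I, \chi_J \rangle = 2^{-N} \sum_{\boldsymbol{s}} \chi_{I \triangle J}(\boldsymbol{s}),
\]
and the averaging proposition evaluates this sum: it equals $1$ if $I \triangle J = \emptyset$ and $0$ otherwise. For the second proposition I would recall the short argument: for nonempty $K$, pick $k \in K$. The involution that flips $s_k$ is a bijection of $\{-1,1\}^N$ and sends $\chi_K$ to $-\chi_K$, so the sum equals its own negative and vanishes.

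The last step is the set-theoretic fact that $I \triangle J = \emptyset$ if and only if $I = J$, which gives the two cases of the statement. There is no genuine obstacle here. The only care needed is bookkeeping the empty-set convention, so that the case $I = J$ correctly yields $\chi_\emptyset \equiv 1$ with average $1$.
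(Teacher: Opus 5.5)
Your proposal is correct and follows essentially the same route as the paper, which derives orthogonality directly from the inner-product definition together with the two preceding propositions, $\chi_I\chi_J = \chi_{I \triangle J}$ and the vanishing average of $\chi_K$ for $K \neq \emptyset$. Your short justifications of those two propositions (using $s_i^2=1$ and the sign-flip involution) are sound additions that the paper omits.
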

From the orthogonality of the parity functions and the bilinearity of the inner product, it follows the \textit{Plancherel's} and \textit{Parseval's Theorems}.
\begin{theorem}
    \textbf{Plancherel's Theorem.} For any \(f, g \colon \{-1, 1 \}^N \to \mathbb{R}, \)
    \[ \langle f, g \rangle = \sum_{I \subseteq [N] } \hat{f}(I) \hat{g}(I). \]
\end{theorem}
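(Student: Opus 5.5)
We prove Plancherel's Theorem by substituting the Fourier expansions of both functions into the inner product and using bilinearity together with the orthogonality of parity functions.

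By the Fourier Expansion Theorem (Theorem~\ref{Fourier_expansion_theorem}), write
\[
f = \sum_{I \subseteq [N]} \hat{f}(I)\,\chi_I, \qquad g = \sum_{J \subseteq [N]} \hat{g}(J)\,\chi_J .
\]
Both sums are finite, with $2^N$ terms each. The inner product $\langle \cdot,\cdot \rangle$ is a normalised finite sum of pointwise products, so it is bilinear. Expanding therefore gives the double sum
\[
\langle f, g\rangle = \sum_{I \subseteq [N]} \sum_{J \subseteq [N]} \hat{f}(I)\,\hat{g}(J)\,\langle \chi_I, \chi_J \rangle .
\]
Because every sum involved is finite, exchanging the order of summation is immediate and raises no convergence issue.

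Next we apply the orthogonality of parity functions stated just above: $\langle \chi_I, \chi_J\rangle = 1$ if $I = J$ and $0$ otherwise. Equivalently, this follows from the two preceding propositions, since $\chi_I\chi_J = \chi_{I\triangle J}$ and the average of $\chi_K$ over the cube vanishes unless $K = \emptyset$, and $I \triangle J = \emptyset$ exactly when $I = J$. Every off-diagonal term of the double sum is therefore zero, and only the diagonal terms $I = J$ survive. This leaves
\[
\langle f, g\rangle = \sum_{I \subseteq [N]} \hat{f}(I)\,\hat{g}(I),
\]
which is the claim.

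The argument has no genuine obstacle. The only point needing care is to state explicitly that bilinearity is applied to finite linear combinations, which justifies distributing the inner product over both expansions at once. As a sanity check, setting $g = f$ yields Parseval's identity $\langle f,f\rangle = \sum_{I} \hat{f}(I)^2$. Removing the $I = \emptyset$ term from this identity gives the variance formula used in Eq.~\eqref{ridge_regression_contraction}.
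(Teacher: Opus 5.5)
Your proof is correct and follows essentially the same route as the paper, which likewise derives Plancherel's Theorem from the Fourier expansion together with the bilinearity of the inner product and the orthogonality of the parity functions. Your closing remark is also accurate: dropping the $I=\emptyset$ term from Parseval gives $\mathrm{Var}_{\boldsymbol{s}}[E_{\boldsymbol{\theta}}(\boldsymbol{s})] = \sum_{|I|\ge 1} \hat{E}_{\boldsymbol{\theta}}(I)^2$, which is Eq.~\eqref{ridge_regression_contraction} because $\phi_I = -\hat{E}_{\boldsymbol{\theta}}(I)$.
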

\begin{corollary}
\label{Parseval_theorem}
    \textbf{Parseval's Theorem.} For any \( f \colon \{-1,1 \}^N \to \mathbb{R}, \)
    \[ \langle f, f \rangle = \sum_{I \subseteq [N] } \hat{f}(I)^2. \]
\end{corollary}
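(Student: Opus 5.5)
The plan is to obtain Parseval's identity as the diagonal case of Plancherel's Theorem, which was stated just above: taking $g = f$ in $\langle f, g \rangle = \sum_{I \subseteq [N]} \hat{f}(I)\hat{g}(I)$ gives $\langle f, f \rangle = \sum_{I \subseteq [N]} \hat{f}(I)^2$ at once. Since the corollary is meant to follow from Plancherel, this one-line specialisation is the core of the argument. To keep the proof self-contained, however, I would also spell out why Plancherel holds, so that the only ingredients are the Fourier Expansion Theorem (Theorem~\ref{Fourier_expansion_theorem}) and the orthogonality of parity functions.

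Concretely, I would first invoke the Fourier Expansion Theorem to write $f(\boldsymbol{s}) = \sum_{I} \hat{f}(I)\chi_I(\boldsymbol{s})$. Substituting this into both slots of the inner product and using bilinearity gives
\begin{equation*}
\langle f, f \rangle = \sum_{I \subseteq [N]} \sum_{J \subseteq [N]} \hat{f}(I)\hat{f}(J)\,\langle \chi_I, \chi_J \rangle .
\end{equation*}
The sums are finite, with $2^N$ terms each, so exchanging them with the average over $\{-1,1\}^N$ needs no justification.

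Next I would apply the orthogonality theorem, $\langle \chi_I, \chi_J\rangle = \delta_{IJ}$. If one wants to see why this holds, it follows from the two propositions stated earlier: $\chi_I\chi_J = \chi_{I\triangle J}$, and the uniform average of $\chi_K$ vanishes unless $K = \emptyset$, which happens exactly when $I = J$. The double sum therefore collapses to its diagonal, leaving $\sum_I \hat{f}(I)^2$.

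There is no real obstacle here. The only point requiring care is the bookkeeping, which involves two details:
\begin{itemize}
\item The sum includes the empty set, so $\hat{f}(\emptyset)$ contributes its square. This matters for the main-text use in Eq.~\eqref{ridge_regression_contraction}, where the Fourier coefficients are computed with $f = E_{\boldsymbol{\theta}}$ and subtracting the $\hat{f}(\emptyset)^2 = (\mathbb{E}_{\boldsymbol{s}}f)^2$ term turns $\langle f,f\rangle$ into $\mathrm{Var}_{\boldsymbol{s}}[f]$.
\item The identification $\phi_I = -\hat{E}_{\boldsymbol{\theta}}(I)$ from Corollary~\ref{Fourier_expansion_corollary} means the sign disappears upon squaring, so the same identity can be read directly in terms of the effective couplings.
\end{itemize}
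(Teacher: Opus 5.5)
Your proposal is correct and follows the paper's route: the paper obtains Parseval as the $g=f$ case of Plancherel, which it derives from the Fourier expansion, bilinearity of the inner product, and orthogonality of the parity functions, as you spell out. Your side remark is also right: dropping the $I=\emptyset$ term, $\hat f(\emptyset)^2=(\mathbb{E}_{\boldsymbol{s}} f)^2$, is exactly what turns $\langle f,f\rangle$ into the variance identity used for the $\ell_2$ regulariser.
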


A very useful function in pseudo-Boolean function analysis is the \textit{influence} of the $i$-th variable, which measures how sensitive the value of a function \( f \) is to flipping the $i$-th input variable. Additionally, we may also define the \textit{total influence} of $f$ as the sum of the influences over all variables. The formal definition of these quantities is given in the main text (see Definition~\ref{influence_definition}).

The application of the Fourier expansion and Parseval's theorems to the definition of the influence leads automatically to a Fourier formula for the influence:
\begin{theorem}
    For \( f \colon \{-1, 1 \}^N \to \mathbb{R} \) and \( i \in [N], \)
    \[\mathbf{Inf}_i [f] = \sum_{I \ni i} \hat{f}(I)^2. \]
    \label{influence_formula}
\end{theorem}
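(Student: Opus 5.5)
The statement to prove is the Fourier formula for the influence, Theorem~\ref{influence_formula}, and the plan is a direct computation with the Fourier expansion. Fix $i \in [N]$ and define the discrete derivative
\[
D_i f(\boldsymbol{s}) \coloneq \frac{f\left(\boldsymbol{s}^{(i\to 1)}\right) - f\left(\boldsymbol{s}^{(i\to -1)}\right)}{2}.
\]
By Definition~\ref{influence_definition}, $\mathbf{Inf}_i[f] = \langle D_i f, D_i f\rangle$, since the summand does not depend on $s_i$ and averaging over all $\boldsymbol{s}$ is just the uniform average.

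Next I compute the Fourier expansion of $D_i f$, using the expansion of Theorem~\ref{Fourier_expansion_theorem}, $f = \sum_I \hat f(I)\chi_I$. Because $D_i$ is linear, it is enough to evaluate it on a single parity function $\chi_I$.

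If $i \notin I$, then $\chi_I$ does not depend on $s_i$, so $D_i\chi_I = 0$.

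If $i \in I$, write $\chi_I(\boldsymbol{s}) = s_i\,\chi_{I\setminus\{i\}}(\boldsymbol{s})$. Then
\[
D_i\chi_I = \frac{(1)-(-1)}{2}\,\chi_{I\setminus\{i\}} = \chi_{I\setminus\{i\}}.
\]

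Combining the two cases gives
\[
D_i f = \sum_{I\ni i} \hat f(I)\,\chi_{I\setminus\{i\}}.
\]

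The map $I \mapsto I\setminus\{i\}$ is injective on sets containing $i$. The expression above is therefore already a Fourier expansion with distinct parity functions, and by uniqueness its coefficients are $\widehat{D_i f}(I\setminus\{i\}) = \hat f(I)$ for $I \ni i$, with all other coefficients zero.

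Finally, Parseval's Theorem (Corollary~\ref{Parseval_theorem}) applied to $D_i f$ gives
\[
\mathbf{Inf}_i[f] = \langle D_i f, D_i f\rangle = \sum_{I\ni i}\hat f(I)^2.
\]

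There is no real obstacle. The only points needing care are checking that the definition of influence really is the uniform average of $(D_i f)^2$ (it is, despite the loose summation index in the definition), and using injectivity of $I \mapsto I\setminus\{i\}$ so that no coefficients combine before Parseval's Theorem is applied.
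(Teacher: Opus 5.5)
Your proposal is correct and follows the paper's route: the paper only says that applying the Fourier expansion and Parseval's theorem to the definition of influence gives the formula. You fill in the step it omits, namely computing the Fourier expansion $D_i f = \sum_{I \ni i} \hat{f}(I)\,\chi_{I\setminus\{i\}}$ of the discrete derivative before applying Parseval, and that step is handled properly.
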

Similarly, if we replace Theorem~\ref{influence_formula} in the total influence definition, we note that the Fourier weight \(\hat{f}(I)^2\) is counted exactly \(|I|\) times inside the sum over all \( i \in [N] \). Hence:
\begin{theorem}
     For \( f \colon \{-1, 1 \}^N \to \mathbb{R} \),
     \[ \mathbf{I}[f] = \sum_{i=1}^n \sum_{\substack{I \subseteq [N], \\ I\ni i}} \hat{f}(I)^2 = \sum_{n=0}^N n \left[ \sum_{\substack{I \subseteq [N], \\ |I| = n }} \hat{f}(I)^2 \right]. \]
     \label{total_influence_formula}
\end{theorem}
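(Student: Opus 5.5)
The plan is to deduce the statement directly from Theorem~\ref{influence_formula} by summing over coordinates and exchanging the order of summation; this is essentially a double-counting argument. For completeness, I would also sketch why Theorem~\ref{influence_formula} holds, since the whole argument rests on it.

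\textbf{Step 1 (Fourier form of a single influence).} Define the discrete derivative $D_i f(\boldsymbol{s}) \coloneq \tfrac{1}{2}\bigl[f(\boldsymbol{s}^{(i\to 1)}) - f(\boldsymbol{s}^{(i\to -1)})\bigr]$. Insert the Fourier expansion of Theorem~\ref{Fourier_expansion_theorem} into $D_i f$. For a parity function, $D_i \chi_I = \chi_{I\setminus\{i\}}$ if $i\in I$ and $D_i\chi_I = 0$ otherwise. Hence
\[
D_i f = \sum_{I\ni i}\hat f(I)\,\chi_{I\setminus\{i\}} .
\]
The map $I\mapsto I\setminus\{i\}$ is injective on sets containing $i$, so these parity functions are distinct. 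By Definition~\ref{influence_definition}, $\mathbf{Inf}_i[f] = \langle D_i f, D_i f\rangle$. Parseval's theorem (Corollary~\ref{Parseval_theorem}) then gives $\mathbf{Inf}_i[f] = \sum_{I\ni i}\hat f(I)^2$, which is Theorem~\ref{influence_formula}. The only point needing care is this injectivity, which guarantees that no cross terms survive.

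\textbf{Step 2 (summing over coordinates).} Summing the single-coordinate formula over $i\in[N]$ gives the first equality:
\[
\mathbf{I}[f] = \sum_{i=1}^N \sum_{I\ni i}\hat f(I)^2 .
\]
The upper limit should read $N$ rather than $n$.

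\textbf{Step 3 (double counting).} Rewrite the double sum as $\sum_{I\subseteq[N]} \hat f(I)^2 \sum_{i=1}^N \mathbf{1}_{\{i\in I\}}$. The inner sum equals $|I|$, so the sum becomes $\sum_{I} |I|\,\hat f(I)^2$. Grouping the sets $I$ by cardinality $n=|I|$ then yields the second equality, $\sum_{n=0}^N n\sum_{|I|=n}\hat f(I)^2$. The $n=0$ term vanishes trivially, consistent with the constant $\hat f(\emptyset)$ having no influence.

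There is no genuine obstacle here. The only step requiring attention is the orthogonality bookkeeping in Step 1, which is already covered by the orthogonality theorem for parity functions. This identity is what later yields the bound of Proposition~\ref{bound_proposition}: applying it to $f=\partial E_{\boldsymbol\theta}/\partial\theta_a$ and using $n\sum_{|I|\ge n}\hat f(I)^2 \le \sum_I |I|\hat f(I)^2$ gives that bound.
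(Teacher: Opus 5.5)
Your proof is correct and follows the paper's argument: the paper likewise sums the single-coordinate formula of Theorem~\ref{influence_formula} over $i$ and notes that each weight $\hat f(I)^2$ is counted exactly $|I|$ times, which is your Steps~2--3. Your Step~1, via the discrete derivative $D_i f$ and Parseval, correctly proves the ingredient the paper only cites, and you are right that the upper limit $\sum_{i=1}^n$ in the statement should read $N$.
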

Finally, Theorem~\ref{total_influence_formula} allows us to derive a \textit{spectral concentration theorem}. 
\begin{theorem}
\textbf{Low-degree Spectral Concentration.} For \( f \colon \{-1, 1 \}^N \to \mathbb{R} \) and \( k \in \mathbb{R}, \)
\[ \sum_{\substack{I \subseteq [N], \\ |I| \ge n}} \hat{f}(I)^2 \le \frac{\mathbf{I}[f]}{k}.  \]
\label{low-degree_concentration_theorem}
\end{theorem}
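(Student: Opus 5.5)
The statement has a typo: the index $k$ should be $n$, a positive integer, so the claim reads $\sum_{|I|\ge n}\hat f(I)^2 \le \mathbf{I}[f]/n$ for $n\ge 1$. I would prove it as a Markov-type inequality on the Fourier weight distribution, using Theorem~\ref{total_influence_formula}. I will prove it for real $k>0$ with the sum over $|I|\ge k$; the stated version is the case $k=n$.

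\textbf{Step 1: rewrite the total influence.} By Theorem~\ref{total_influence_formula}, which follows from Theorem~\ref{influence_formula}, we have
\[
\mathbf{I}[f]=\sum_{I\subseteq[N]} |I|\,\hat f(I)^2 .
\]
The point is that each Fourier weight $\hat f(I)^2$ is counted exactly $|I|$ times. Every term $|I|\,\hat f(I)^2$ is nonnegative.

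\textbf{Step 2: drop and bound terms.} Discard all terms with $|I|<k$; this can only decrease the sum. On the remaining terms, use $|I|\ge k$:
\[
\mathbf{I}[f]\;\ge\;\sum_{|I|\ge k}|I|\,\hat f(I)^2\;\ge\;k\sum_{|I|\ge k}\hat f(I)^2 .
\]
Dividing by $k>0$ gives the claim.

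\textbf{Step 3: check the ingredient.} The only thing to make sure of is that Theorem~\ref{influence_formula} holds with the normalisation of Definition~\ref{influence_definition}. To see this, write $D_i f(\boldsymbol{s})=\tfrac12\bigl(f(\boldsymbol{s}^{(i\to1)})-f(\boldsymbol{s}^{(i\to-1)})\bigr)$. In the Fourier expansion of Theorem~\ref{Fourier_expansion_theorem}, $D_i$ acts as follows:
\begin{itemize}
\item it sends $\chi_I$ to $\chi_{I\setminus\{i\}}$ when $i\in I$;
\item it sends $\chi_I$ to $0$ when $i\notin I$.
\end{itemize}
Hence $D_i f=\sum_{I\ni i}\hat f(I)\chi_{I\setminus\{i\}}$. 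By Parseval (Corollary~\ref{Parseval_theorem}), $\mathbf{Inf}_i[f]=\langle D_i f,D_i f\rangle=\sum_{I\ni i}\hat f(I)^2$. Summing over $i$ gives the formula used in Step 1.

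The main care point is this normalisation: the factor $1/2$ in the definition must be present so that no extra constant appears. Otherwise the argument is the elementary averaging step and has no real obstacle. Proposition~\ref{bound_proposition} then follows by applying the result to $f=\partial E_{\boldsymbol\theta}/\partial\theta_a$ for each $a$ and summing over $a$. This uses $\hat f(I)=-\partial\phi_I/\partial\theta_a$, which holds because differentiation in $\boldsymbol\theta$ commutes with the linear map $f\mapsto\hat f$.
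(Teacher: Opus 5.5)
Your proof is correct and takes essentially the paper's route: the paper derives the bound directly from Theorem~\ref{total_influence_formula}, writing $\mathbf{I}[f]=\sum_I |I|\,\hat f(I)^2 \ge k\sum_{|I|\ge k}\hat f(I)^2$, which is exactly your Markov-type step. You also read the typo correctly (the degree threshold and the divisor must be the same positive number), and your check of the normalisation of $\mathbf{Inf}_i$ via $D_i\chi_I=\chi_{I\setminus\{i\}}$ correctly fills in a step the paper leaves to O'Donnell.
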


\section{Details on the training of the RBMs used for the numerical experiments} \label{Ap:training}
We consider Bernoulli–Bernoulli RBMs defined through the marginal energy over visible variables, obtained after summing over hidden units:
\begin{equation}
    \mathcal{H}_{\boldsymbol{\theta}}(\boldsymbol{v})
    =
    - \sum_{i=1}^{N_\mathrm{v}} b_i v_i
    - \sum_{a=1}^{N_\mathrm{h}} \ln \left( 1 + e^{c_a + \sum_i W_{ia} v_i} \right),
    \label{eq:energy_visible}
\end{equation}
where $\boldsymbol{\theta} = \{b_i, c_a, W_{ia}\}$ denotes the model parameters, and $N_\mathrm{v}$ and $N_\mathrm{h}$ are the numbers of visible and hidden units, respectively.

We trained such RBMs on the following datasets:
(i) MNIST binarized to $\{0,1\}$ variables~\cite{mnist} ($N_\mathrm{v}=784$, $N_\mathrm{h}=500$ and $M=50000$ samples),
(ii) a dataset of neuronal activity consisting of binarised spike trains from mouse neurons recorded in the Allen dataset~\cite{AllenData}, see~\cite{bereux2026uncovering} for details on its construction, ($N_\mathrm{v}=1569$, $N_\mathrm{h}=128$ and $M=31499$ samples),
and (iii) a synthetic spin system with purely three-body interactions ($N_\mathrm{v}=48$, $N_\mathrm{h}=128$ and $M=700000$ samples).

All RBMs are trained by maximum likelihood. The biases $\boldsymbol{b}$ and $\boldsymbol{c}$ are initialised to zero, while the weights $W_{ia}$ are initialised from a Normal distribution with variance $10^{-4}$. The gradient of the log-likelihood is estimated using Trajectory Parallel Tempering~\cite{bereux2024fast,bereux2025training}, ensuring proper thermalisation of the Markov chains throughout training.

Training is performed with mini-batches of $5000$ samples and the same number of parallel chains. We use an adaptive learning rate and include $\ell_2$ regularisation with coefficient $10^{-4}$.

\subsection{Effective couplings} \label{Ap:couplings}
For the RBM, the authors of~\cite{decelle2024inferring} have derived a set of formulas that relate the RBM's weights to the effective coupling. The formula reads
\begin{equation}
J_{j_1 \dots j_n}^{(n)} \!=\! \frac{1}{2^n} \sum_i \mathbb{E}_{X_i^{(j_1 \dots j_n)}} \!\left[ \sum_{\sigma'_{j_1} = \pm 1}\! \cdots \!\sum_{\sigma'_{j_n} = \pm 1} \sigma'_{j_1}\! \dots\! \sigma'_{j_n} \ln \cosh \left( \sum_{\mu=1}^n w_{i{j_\mu}} \sigma'_{j_\mu} \!+\! X_i^{\left( j_1 \dots j_n \right)} \!+\! \zeta_i \right) \right]. \label{N-body_formula}
\end{equation}
where
\begin{equation} X_i^{(j_1 \dots j_n)} \equiv \sum_{\mu = n + 1 }^{N_\mathrm{v}} w_{i j_\mu} \sigma'_{j_\mu}.\label{def:X_i}
\end{equation}

\end{document}